\newcommand{\figwidth}{0.49}
\newcommand{\vect}[1]{{\bm{\mathbf{#1}}}}
\newcommand{\vb}{\vect{b}}
\newcommand{\vx}{\vect{x}}
\newcommand{\vw}{\vect{w}}
\newcommand{\vu}{\vect{u}}
\newcommand{\vv}{\vect{v}}
\newcommand{\mD}{\mathcal{D}}
\newcommand{\mO}{\mathcal{O}}
\newcommand{\mF}{\mathcal{F}}
\newcommand{\mM}{\mathcal{M}}
\newcommand{\edp}{$\epsilon$-differentially privacy}
\newcommand{\pr}{\text{\normalfont Pr}}
\newcommand{\dett}{\text{det}}
\newtheorem{theorem}{Theorem}
\newtheorem{lemma}[theorem]{Lemma}
\newtheorem{proposition}[theorem]{Proposition}
\newtheorem{definition}{Definition}
\newtheorem{remark}{Remark}
\newtheorem{corollary}[theorem]{Corollary}
\newcolumntype{L}[1]{>{\raggedright\let\newline\\\arraybackslash\hspace{0pt}}m{#1}}
\newcolumntype{C}[1]{>{\centering\let\newline  \\\arraybackslash\hspace{0pt}}m{#1}}
\newcolumntype{R}[1]{>{\raggedleft\let\newline \\\arraybackslash\hspace{0pt}}m{#1}}
\title{Privacy-preserving Stacking with Application to \\
	Cross-organizational Diabetes Prediction}
\author{
Quanming Yao$^{1,2}$
\and
Xiawei Guo$^{1,*}$
\and
James T. Kwok$^2$
\and
Wei-Wei Tu$^1$
\and \\
Yuqiang Chen$^1$
\and
Wenyuan Dai$^1$
\And
Qiang Yang$^2$
\affiliations
$^1$4Paradigm Inc\\
$^2$Department of Computer Science and Engineering,
HKUST
}
\begin{document}

\maketitle

\begin{abstract}
To meet the standard of differential privacy,
noise is usually added into the original data,
which inevitably deteriorates the predicting performance
of subsequent learning algorithms.
In this paper,
motivated by the success of improving
predicting performance by ensemble learning,
we propose to enhance privacy-preserving logistic regression by
stacking.
We show
that this can be done
either by
sample-based
or feature-based
partitioning.
However,
we prove
that when privacy-budgets are the same,
feature-based partitioning requires fewer  samples
than sample-based one,
and thus likely has better empirical performance.
As transfer learning is difficult to be integrated with a differential privacy guarantee,
we further combine the proposed method with hypothesis transfer learning 
to address the problem of learning across different organizations.
Finally, we not only demonstrate the effectiveness of our method on two benchmark data sets,
i.e., MNIST and NEWS20,
but also apply it into a real application of cross-organizational diabetes prediction
from RUIJIN data set,
where privacy is of a significant concern.
\footnote{Correspondace to X. Guo at guoxiawei@4paradigm.com}
\end{abstract}

\vspace{-5px}
\section{Introduction}

In recent years, data privacy
has become a serious concern
in both academia and industry
\cite{Dwork2006,Chaudhuri2011,dwork2014algorithmic,Abadi2016}.
There are now privacy laws, such as Europe's
General Data Protection Regulation (GDPR),
which regulates the protection of private data and restricts data transmission
between organizations.
These raise challenges for
cross-organizational machine learning \cite{pathak2010multiparty,hamm2016learning,Papernot2017,Xie2017},
in which data have to be distributed to different organizations,
and
the learning model
needs to
make
predictions
in private.

A number of approaches have been proposed to ensure privacy protection.
In machine learning,
differential privacy \cite{dwork2014algorithmic} is often used
to allow data be exchanged among organizations.
To design a differentially private algorithm,
carefully designed noise is usually added to the original data to disambiguate
the algorithms.
Many standard learning algorithms have been extended for
differential privacy.
These
include logistic regression \cite{Chaudhuri2011},
trees \cite{emekcci2007privacy,fong2012privacy},
and deep networks \cite{shokri2015privacy,Abadi2016}.
In particular, linear models
are simple and easy to understand,
and their differentially private variants
(such as privacy-preserving logistic regression  (PLR))
\cite{Chaudhuri2011})
have rigorous theoretical guarantees
\cite{Chaudhuri2011,bassily2014private,hamm2016learning,kasiviswanathan2016efficient}.
However,
the injection of noise often degrades prediction performance.

Ensemble learning
can often signficantly improve
the performance of a single learning model
\cite{zhou2012ensemble}.
Popular examples
include bagging
\cite{breiman1996bagging}, boosting
\cite{friedman2000additive}, and stacking
\cite{wolpert1992stacked}.
These motivate us
to develop an
ensemble-based
method which
can benefit from data protection,
while enjoying good prediction performance.
Bagging
and boosting
are based on partitioning of training samples,
and use pre-defined rules
(majority or weighted voting) to combine predictions from models
trained on different partitions.
Bagging improves learning performance by reducing the variance.
Boosting, on the other hand, is useful in converting weak models to a strong one.
However, the logistic regression model, which is the focus in this paper, often has good performance in many applications,
and is a relatively strong classifier.
Besides,
it is a convex model and relatively stable.

Thus,
in this paper,
we focus on stacking.
While stacking also partitions the training data, this can be
based on either samples \cite{breiman1996stacked,smyth1999linearly,ozay2012new} or features \cite{boyd2011distributed}.
Multiple low-level models
are then learned
on the different data partitions,
and a high-level model
(typically, a logistic regression model)
is used to combine their predictions.
By combining with PLR,
we show
how differential privacy can be ensured
in stacking.
Besides,
when
the importance
of features
is known a priori,
they can be easily incorporated in feature-based partitioning.
We further analyze
the learning guarantee of
sample-based and feature-based stacking,
and show theoretically
that feature-based partitioning
can have
lower sample complexity (than sample-based partitioning),
and thus better performance.
By adapting the feature importance,
its learning performance can be further boosted.

To demonstrate the superiority of the proposed method,
we perform experiments on two benchmark data sets
(MNIST and NEWS20).
Empirical results confirm that
feature-based stacking performs better than sample-based stacking.
It is also better than directly using PLR on the training data set.
Besides,
the prediction performance is further boosted when feature importance is used.
Finally,
we apply the proposed approach
for cross-organizational diabetes prediction
in the transfer learning setting.
The experiment
is performed on the RUIJIN data set,
which contains over ten thousands diabetes records from across China.
Results show
significantly improved
diabetes prediction
performance
over the state-of-the-art,
while
still
protecting
data privacy.

\vspace{3px}
\noindent
\textbf{Notation.}
In the sequel, vectors are denoted by lowercase boldface, and $(\cdot)^{\top}$ denotes transpose of a vector/matrix;
$\sigma(a) = \nicefrac{\exp(a)}{(1 + \exp(a))}$ is the sigmoid function.
A function $g$ is
$\mu$-strongly convex if $g(\alpha\vw + (1-\alpha)\vu) \le \alpha g(\vw) + (1-\alpha) g(\vu) - \frac{\mu}{2}\alpha(1-\alpha)\|\vw-\vu\|^2$
for any $\alpha\in (0, 1)$.

\vspace{-5px}
\section{Related Works}

\subsection{Differential Privacy}
Differential privacy \cite{Dwork2006,dwork2014algorithmic} has been established as a rigorous standard
to guarantee privacy for algorithms that access private data.
Intuitively,
given a privacy budget
$\epsilon$,
an algorithm preserves {\edp} if changing one entry in
the data set does not change the likelihood of any
of the algorithm's
output
by more than $\epsilon$.
Formally, it is defined as follows.

\begin{definition}[\cite{Dwork2006}]
	\label{def:privacy}
	A randomized mechanism $M$ is $\epsilon$-differentially private if for all output $t$ of $M$
	and for all input data $\mD_1, \mD_2$ differing by one element,
	$\pr(M(\mD_1) = t) \le e^\epsilon \, \pr(M(\mD_2) = t)$.
\end{definition}

To meet the {\edp} guarantee,
careful perturbation or
noise usually needs to be added to the learning algorithm.
A smaller $\epsilon$
provides stricter privacy guarantee
but at the expense of heavier noise,
leading to larger performance deterioration \cite{Chaudhuri2011,bassily2014private}.
A relaxed version of $\epsilon$-differentially private, called
$(\epsilon, \delta)$-differentially privacy
in which $\delta$ measures the loss in privacy,
is proposed
\cite{dwork2014algorithmic}.
However,
we focus on the more stringent Definition~\ref{def:privacy}
in this paper.

\vspace{-5px}
\subsection{Privacy-preserving Logistic Regression (PLR)}
\label{sec:plr}

Logistic regression has been popularly used in machine learning \cite{friedman2001elements}.
Various differential privacy
approaches have
been developed for logistic regression.
Examples include
output perturbation \cite{Dwork2006,Chaudhuri2011},
gradient perturbation \cite{Abadi2016}
and objective
perturbation \cite{Chaudhuri2011,bassily2014private}.
In particular,
objective perturbation, which adds designed and random noise to the learning objective,
has both privacy and learning guarantees as well as good empirical performance.

Privacy-preserving logistic regression (PLR)
\cite{Chaudhuri2011}
is the state-of-the-art
model based on objective perturbation.
Given a data set $\mathcal{D} = \{ \vx_i$, $y_i \}_{i = 1}^n$,
where $\vx_i\in {\mathbb R}^d$ is the sample and $y_i$ the corresponding class label,
we first consider the regularized risk minimization problem:
\begin{eqnarray}
\label{eq:reg_problem}
\min_{\vw} \nicefrac{1}{n}\sum\nolimits_{i=1}^n \ell( \vw^{\top}\vx_i, y_i)
+\lambda g(\vw),
\end{eqnarray}
where
$\vw$ is a vector of the model parameter,
$\ell(\hat{y}, y) = \log(1 + e^{-y \hat{y}})$ is the logistic loss
(with predicted label $\hat{y}$ and given label $y$),
$g$ is the regularizer and $\lambda \ge 0$ is a hyperparameter.
To guarantee privacy,
\citeauthor{Chaudhuri2011}
(\citeyear{Chaudhuri2011})
added two extra terms to \eqref{eq:reg_problem}, leading to:
\begin{eqnarray}
\min_{\vw}
\nicefrac{1}{n}\sum\nolimits_{i=1}^n \ell(\vw^{\top}\vx_i, y_i)
\! + \! \nicefrac{\vb^{\top} \! \vw}{n} 
\! + \! \nicefrac{\Delta\|\vw\|^2}{2}
+ \lambda g(\vw),
\label{eq:privatelr}
\end{eqnarray}
where
$\vb$ is random noise
drawn from
$h(\vb) \propto \exp( \nicefrac{\epsilon'}{2} \|\vb\|)$
with $\mathbb{E}( \| \vb \| ) = \nicefrac{2 d}{\epsilon'}$,
$\epsilon'$ is a privacy budget modified from $\epsilon$,
and
$\Delta$ is a scalar depending
on $\lambda$, $n, \epsilon$.
The whole PLR procedure is shown in Algorithm~\ref{alg:plr}.

\begin{algorithm}[ht]
\caption{PLR: Privacy-preserving logistic regression. }
\small
	\begin{algorithmic}[1]
		\REQUIRE privacy budget $\epsilon$, data set $\mD$;

		\STATE $\epsilon' = \epsilon - \log(1 + \nicefrac{1}{2n\lambda} + \nicefrac{1}{16n^2\lambda^2})$;
		\IF{$\epsilon' > 0$}
		\STATE $\Delta = 0$;
		\ELSE
		\STATE $\Delta = (4n(\exp(\nicefrac{\epsilon}{4})-1))^{-1} - \lambda$ and $\epsilon' = \nicefrac{\epsilon}{2}$;
		\ENDIF
		\STATE scale $\|\vx\| \le 1$ for all $\vx \in \mD$;
		\STATE pick a random vector $\vb$ from $h(\vb) \propto \exp{(\nicefrac{\epsilon' \|\vb\|}{2})}$;
		\STATE obtain $\vw$ by solving \eqref{eq:privatelr};

		\RETURN $\vw$.
	\end{algorithmic}
	\label{alg:plr}
\end{algorithm}

\begin{proposition}[\cite{Chaudhuri2011}]
\label{thm:dfgua}
If the regularizer $g$ is strongly convex,
Algorithm~\ref{alg:plr} provides $\epsilon$-differential privacy.
\end{proposition}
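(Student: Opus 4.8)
The plan is to run the change-of-variables argument that is standard for objective perturbation. Fix two data sets $\mD_1,\mD_2$ differing in one element and an arbitrary candidate output $\vw$. Since $g$ is strongly convex while the logistic loss, the linear term $\nicefrac{\vb^{\top}\vw}{n}$, and the quadratic term in \eqref{eq:privatelr} are convex, the objective in \eqref{eq:privatelr} is strongly convex; hence for each data set its minimizer is unique and is the unique solution of the stationarity equation
\[
\nicefrac{1}{n}\textstyle\sum_i \ell'(\vw^{\top}\vx_i,y_i)\,\vx_i \;+\; \nicefrac{\vb}{n} \;+\; \Delta\vw \;+\; \lambda\nabla g(\vw) \;=\; 0 .
\]
Solving this equation for $\vb$ exhibits, with the data held fixed, a smooth bijection $\vw\mapsto\vb$ with nonsingular Jacobian (nonsingularity again coming from strong convexity). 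Write $\vb_1,\vb_2$ for the noise vectors that make $\vw$ the output of Algorithm~\ref{alg:plr} on $\mD_1,\mD_2$, and $J_1,J_2$ for the corresponding Jacobians $\partial\vb/\partial\vw$ at $\vw$. Then the output density at $\vw$ on input $\mD_j$ is $h(\vb_j)\,\lvert\dett(J_j)\rvert$, so
\[
\frac{\pr(M(\mD_1)=\vw)}{\pr(M(\mD_2)=\vw)} \;=\; \frac{h(\vb_1)}{h(\vb_2)}\cdot\frac{\lvert\dett(J_1)\rvert}{\lvert\dett(J_2)\rvert},
\]
and the task reduces to bounding the two factors so that their product is at most $e^{\epsilon}$.

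For the noise factor, $\vb_1-\vb_2$ equals $\pm\big(\ell'(\vw^{\top}\vx,y)\,\vx-\ell'(\vw^{\top}\vx',y')\,\vx'\big)$ for the two examples $(\vx,y),(\vx',y')$ in which $\mD_1$ and $\mD_2$ disagree; because $\lvert\ell'\rvert\le1$ for the logistic loss and $\NM{\vx}{}\le1$ after the rescaling in Algorithm~\ref{alg:plr}, each term has norm at most $1$, so $\NM{\vb_1-\vb_2}{}\le2$. With $h(\vb)\propto\exp(-\nicefrac{\epsilon'}{2}\NM{\vb}{})$, the triangle inequality gives $\nicefrac{h(\vb_1)}{h(\vb_2)}\le\exp\!\big(\nicefrac{\epsilon'}{2}\NM{\vb_1-\vb_2}{}\big)\le\exp(\epsilon')$.

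For the Jacobian factor, differentiating the stationarity equation gives $-J_j=\sum_i \ell''(\vw^{\top}\vx_i,y_i)\,\vx_i\vx_i^{\top}+n\Delta I+n\lambda\nabla^2 g(\vw)$, which is positive definite with $-J_j\succeq n(\Delta+\lambda)I$ since strong convexity gives $\nabla^2 g(\vw)\succeq I$ and $\ell''\ge0$. Passing from $\mD_1$ to $\mD_2$ perturbs $-J_j$ by a symmetric matrix of rank at most $2$ and spectral norm at most $\nicefrac14$ (from $\ell''\le\nicefrac14$, $\NM{\vx}{}\le1$); consequently the at most two nonzero eigenvalues of $(-J_2)^{-1}(-J_1)-I$ have magnitude at most $\nicefrac{1}{4n(\Delta+\lambda)}$, whence $\lvert\dett(J_1)\rvert/\lvert\dett(J_2)\rvert\le\big(1+\nicefrac{1}{4n(\Delta+\lambda)}\big)^2$. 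When $\epsilon'>0$, Algorithm~\ref{alg:plr} sets $\Delta=0$, so this factor equals $1+\nicefrac{1}{2n\lambda}+\nicefrac{1}{16n^2\lambda^2}$; multiplying by the noise factor $\exp(\epsilon')$ and plugging in the value of $\epsilon'$ from line~1 gives a product equal to $e^{\epsilon}$. When $\epsilon'\le0$, the algorithm uses $\epsilon'=\nicefrac{\epsilon}{2}$ and $\Delta$ with $\Delta+\lambda=(4n(\exp(\nicefrac{\epsilon}{4})-1))^{-1}$, so the Jacobian factor is at most $(\exp(\nicefrac{\epsilon}{4}))^2=\exp(\nicefrac{\epsilon}{2})$ and the noise factor is $\exp(\nicefrac{\epsilon}{2})$, again with product $e^{\epsilon}$. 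In both cases $\pr(M(\mD_1)=\vw)\le e^{\epsilon}\pr(M(\mD_2)=\vw)$, which is exactly Definition~\ref{def:privacy}.

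The step I expect to be the main obstacle is the Jacobian bound: one has to show that a rank-$\le2$ perturbation of the Hessian produces precisely the factor $\big(1+\nicefrac{1}{4n(\Delta+\lambda)}\big)^2$ (via $\dett(I+M)=(1+\gamma_1)(1+\gamma_2)$ for the two nonzero eigenvalues $\gamma_1,\gamma_2$, each small), and then check that this factor and the noise factor $\exp(\epsilon')$ multiply to exactly $e^{\epsilon}$ under the two definitions of $\epsilon'$ in Algorithm~\ref{alg:plr} — in particular that the fallback $\Delta>0$ (which requires $\lambda\le(4n(\exp(\nicefrac{\epsilon}{4})-1))^{-1}$, itself implied by $\epsilon'\le0$) injects enough strong convexity without affecting the noise bound. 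A secondary point is the measure-theoretic reading of $\pr$ in Definition~\ref{def:privacy} for a continuous-valued mechanism, and the smoothness of $g$ needed to differentiate the stationarity equation; both are routine for the regularizers used here.
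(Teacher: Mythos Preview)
Your argument is correct and is exactly the objective-perturbation proof from \cite{Chaudhuri2011}. The paper does not supply its own proof of Proposition~\ref{thm:dfgua}; it simply cites that result. The closest thing in the paper is the proof of the more general Theorem~\ref{thm:prsplit_plr} via Lemmas~\ref{lem:split_plr1} and~\ref{lem:split_privacy}, which runs the same change-of-variables argument you outline, with the only difference that samples are scaled to $\|\vx\|\le q_k$ rather than $1$; setting $K=1$ and $q_1=1$ there recovers precisely your computation.

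One minor stylistic difference worth noting: the paper (following \cite{Chaudhuri2011}) bounds the Jacobian factor using the trace-norm inequality $|\tau_1(E)|+|\tau_2(E)|\le 2c$ together with AM--GM for the product, rather than bounding each eigenvalue of $E$ by $c=\nicefrac14$ directly as you do. Your spectral-norm claim is in fact correct (a difference $\alpha vv^{\top}-\beta uu^{\top}$ of two rank-one PSD matrices with $\alpha,\beta\le c$ has spectral norm at most $\max(\alpha,\beta)\le c$), and it makes the eigenvalue step slightly cleaner; both routes land on the same factor $\big(1+\nicefrac{1}{4n(\Delta+\lambda)}\big)^2$.
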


While
privacy guarantee
is desirable,
the resultant
privacy-preserving
machine learning model
may not have good learning performance.
In practice,
the performance typically
degrades
dramatically
because of the introduction of noise
\cite{Chaudhuri2011,rajkumar2012differentially,bassily2014private,shokri2015privacy}.
Assume that samples from $\mD$ are drawn i.i.d. from an underlying distribution $P$.
Let $L(\vw; P) = \mathbb{E}_{(\vx, y)\sim P} [\ell(\vw^{\top}\vx, y)]$ be the expected loss of the model.
The following Proposition shows the number of samples needed for PLR
to have
comparable performance as a given baseline model.
This bound is tight
for any $\epsilon$-differential privacy algorithm,
and cannot be further improved \cite{kifer2012private,bassily2014private}.
However,
in contrast, the standard logistic regression model in \eqref{eq:reg_problem} only needs
$n >  \nicefrac{C_1 \|\mathbf{v}\|^2 \log(\frac{1}{\delta})}{\epsilon_g^2}$ samples
\cite{shalev2008svm},
and  thus may be smaller.

\begin{proposition}[\cite{Chaudhuri2011}]
\label{pr:plrl}
Let $g(\cdot) = \nicefrac{1}{2} \| \cdot \|^2$, and $\mathbf{v}$ be a reference model parameter.
Given $\delta > 0$ and $\epsilon_g > 0$, there exists a constant $C_1$ such that when
\begin{equation}
\label{eq:thm_plr}
n \! > \! C_1
\max
\left( \nicefrac{\|\mathbf{v}\|^2\log(\frac{1}{\delta})}{\epsilon_g^2},
\nicefrac{d\log(\frac{d}{\delta})\|\mathbf{v}\|}{\epsilon_g \epsilon},
\nicefrac{\|\mathbf{v}\|^2}{\epsilon_g\epsilon}
\right),
\!\!
\end{equation}
$\vw$ from Algorithm~\ref{alg:plr} meets
$\pr [ L( \vw, \! P ) \! \le \! L( \mathbf{v}, \! P) \! + \! \epsilon_g ] \! \ge \! 1 \! - \! \delta$.
\end{proposition}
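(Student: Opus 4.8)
The plan is to bound the excess population risk $L(\vw;P)-L(\mathbf{v};P)$ by splitting it into three sources of error: the \emph{bias} introduced by the regularizer $\lambda g$, the \emph{statistical error} of a $\lambda$-strongly-convex regularized empirical risk minimizer, and the \emph{perturbation error} caused by the privacy mechanism (the noise $\vb$ together with the extra ridge term $\tfrac{\Delta}{2}\|\vw\|^2$). Let $J_{\lambda}(\vw)=\tfrac1n\sum_i\ell(\vw^{\top}\vx_i,y_i)+\tfrac{\lambda}{2}\|\vw\|^2$ be the non-private regularized objective, with minimizer $\vw^{\star}$; by \eqref{eq:privatelr} (with $g=\tfrac12\|\cdot\|^2$) the output $\vw$ minimizes $J_{\lambda}(\vw)+\tfrac{\vb^{\top}\vw}{n}+\tfrac{\Delta}{2}\|\vw\|^2$. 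Since $L(\vw;P)\le L(\vw;P)+\tfrac{\lambda}{2}\|\vw\|^2$, adding and subtracting the regularized population risk at $\vw^{\star}$ gives $L(\vw;P)-L(\mathbf{v};P)\le A+B+\tfrac{\lambda}{2}\|\mathbf{v}\|^2$, where $A=(L(\vw;P)+\tfrac\lambda2\|\vw\|^2)-(L(\vw^{\star};P)+\tfrac\lambda2\|\vw^{\star}\|^2)$ (perturbation) and $B=(L(\vw^{\star};P)+\tfrac\lambda2\|\vw^{\star}\|^2)-(L(\mathbf{v};P)+\tfrac\lambda2\|\mathbf{v}\|^2)$ (statistical). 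Choosing $\lambda\asymp\epsilon_g/\|\mathbf{v}\|^2$ forces the bias term $\le\epsilon_g/3$, so it suffices to make $A\le\epsilon_g/3$ and $B\le\epsilon_g/3$ with probability $\ge1-\delta$.

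For the perturbation term $A$, the key step is a minimizer-perturbation bound for strongly convex functions. Since $J_{\lambda}$ is $\lambda$-strongly convex with minimizer $\vw^{\star}$, $J_{\lambda}(\vw)-J_{\lambda}(\vw^{\star})\ge\tfrac{\lambda}{2}\|\vw-\vw^{\star}\|^2$; and optimality of $\vw$ for the perturbed objective gives $J_{\lambda}(\vw)-J_{\lambda}(\vw^{\star})\le\tfrac{\|\vb\|}{n}\|\vw-\vw^{\star}\|+\tfrac{\Delta}{2}\|\vw^{\star}\|^2$. Combining these (a quadratic inequality in $\|\vw-\vw^{\star}\|$) yields $\|\vw-\vw^{\star}\|\lesssim\tfrac{\|\vb\|}{\lambda n}+\sqrt{\tfrac{\Delta}{\lambda}}\,\|\vw^{\star}\|$ and, crucially, the regularized-risk gap $J_{\lambda}(\vw)-J_{\lambda}(\vw^{\star})\lesssim\tfrac{\|\vb\|^2}{\lambda n^2}+\Delta\|\vw^{\star}\|^2$, i.e. the perturbation enters \emph{quadratically}. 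Transferring this empirical gap to the population gap $A$ via the same uniform-convergence estimate used for $B$, and using that $\|\vw^{\star}\|$ is controlled in terms of $\|\mathbf{v}\|$ (compare $J_{\lambda}(\vw^{\star})$ with $J_{\lambda}(\mathbf{v})$), it remains to bound $\|\vb\|$ and $\Delta$: since $\vb\sim h$ with $\mathbb{E}(\|\vb\|)=2d/\epsilon'$ and $\|\vb\|$ follows a Gamma law, a Chernoff tail bound gives $\|\vb\|\lesssim\tfrac{d\log(d/\delta)}{\epsilon'}$ with probability $\ge1-\delta/3$; and from Algorithm~\ref{alg:plr}, $\epsilon'=\Theta(\epsilon)$ while $\Delta$ is either $0$ or $O(\tfrac1{n\epsilon})$. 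Plugging in, the two pieces of $A$ become $\le\epsilon_g/3$ once $n\gtrsim\tfrac{d\log(d/\delta)\|\mathbf{v}\|}{\epsilon_g\epsilon}$ and $n\gtrsim\tfrac{\|\mathbf{v}\|^2}{\epsilon_g\epsilon}$.

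For the statistical term $B$, I would invoke the standard high-probability generalization bound for regularized risk minimization with a $\lambda$-strongly-convex regularizer and a $1$-Lipschitz loss over data with $\|\vx_i\|\le1$ (precisely the ingredient behind the $n>C_1\|\mathbf{v}\|^2\log(\tfrac1\delta)/\epsilon_g^2$ bound for plain logistic regression cited in the text): with probability $\ge1-\delta/3$ one gets $L(\vw^{\star};P)+\tfrac\lambda2\|\vw^{\star}\|^2\le \tfrac1n\sum_i\ell(\vw^{\star\top}\vx_i,y_i)+\tfrac\lambda2\|\vw^{\star}\|^2+\tilde{O}(\tfrac1{\lambda n})$, plus a matching lower bound for the empirical regularized risk at $\mathbf{v}$; together with $J_{\lambda}(\vw^{\star})\le J_{\lambda}(\mathbf{v})$ this gives $B\lesssim\tfrac{\log(1/\delta)}{\lambda n}$, hence $B\le\epsilon_g/3$ once $n\gtrsim\tfrac{\log(1/\delta)}{\lambda\epsilon_g}\asymp\tfrac{\|\mathbf{v}\|^2\log(1/\delta)}{\epsilon_g^2}$. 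Collecting the three lower bounds on $n$ (one from each error source), replacing $\epsilon'$ by $\Theta(\epsilon)$, absorbing universal constants into $C_1$, and allocating $\delta/3$ to each bad event gives exactly \eqref{eq:thm_plr}.

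The main obstacle is the perturbation term $A$: one must (i) pass from the \emph{empirical} regularized-risk gap between $\vw$ and $\vw^{\star}$ to the \emph{population} gap, which needs the generalization estimate to hold uniformly enough to cover the data- and noise-dependent output $\vw$; (ii) handle both regimes of Algorithm~\ref{alg:plr} — $\Delta=0$ when $\epsilon'>0$, and $\Delta>0$ otherwise — within a single bound; and (iii) avoid the crude Lipschitz estimate $|L(\vw;P)-L(\vw^{\star};P)|\le\|\vw-\vw^{\star}\|$, which would be off by a factor $\sim\|\mathbf{v}\|/\epsilon_g$ — it is the \emph{quadratic} dependence of the excess risk on $\|\vw-\vw^{\star}\|$ (equivalently, arguing with the regularized objective throughout) that yields the sharp $\|\mathbf{v}\|$ and $\|\mathbf{v}\|^2$ factors in the last two terms of \eqref{eq:thm_plr}. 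The generalization bound $B$ is essentially a citation, and the $\|\vb\|$-tail estimate and the bookkeeping in $\lambda$ and $\delta$ are routine.
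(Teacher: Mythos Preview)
Your proposal is correct and follows essentially the same argument as \cite{Chaudhuri2011} (the paper itself only cites the result, but its proof of Theorem~\ref{thm:plr_fs_learn}---which reduces to Proposition~\ref{pr:plrl} when $K=1$---uses exactly this decomposition, the Sridharan--Shalev-Shwartz--Srebro transfer from empirical to population regularized risk, the perturbation bound $J_\lambda(\vw)-J_\lambda(\vw^\star)\lesssim\|\vb\|^2/(\lambda n^2)$, the Gamma tail on $\|\vb\|$, and the choice $\lambda\asymp\epsilon_g/\|\mathbf v\|^2$). The only cosmetic difference is that the paper inserts the \emph{population} regularized minimizer $\tilde\vw=\arg\min_{\vw}L(\vw;P)+\tfrac{\lambda}{2}\|\vw\|^2$ rather than the empirical minimizer $\vw^{\star}$ as the intermediate point: this makes your term $B$ trivially $\le 0$ and collapses your $A$ and $B$ into a single application of the Sridharan inequality $\tilde F(\vw)-\tilde F(\tilde\vw)\le 2\bigl(J_\lambda(\vw)-J_\lambda(\vw^{\star})\bigr)+O\bigl(\tfrac{\log(1/\delta)}{\lambda n}\bigr)$, so your obstacle~(i) is sidestepped. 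Your obstacle~(ii) also dissolves: the condition $n\gtrsim\|\mathbf v\|^2/(\epsilon_g\epsilon)$ combined with $\lambda=\epsilon_g/\|\mathbf v\|^2$ gives $n\lambda\gtrsim 1/\epsilon$, which forces $\epsilon'>0$ in Algorithm~\ref{alg:plr} and hence $\Delta=0$, so the $\Delta>0$ branch never enters the analysis.
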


\vspace{-5px}
\subsection{Multi-Party Data Learning}
\label{sec:multipart}

Ensemble learning has been considered
with differential privacy under multi-party data learning (MPL).
The task is to
combine predictors from multiple parties
with privacy
\cite{pathak2010multiparty}.
\citeauthor{pathak2010multiparty} [\citeyear{pathak2010multiparty}]
first proposed a specially designed protocol to privately
combine multiple predictions.
The performance is later surpassed by \cite{hamm2016learning,Papernot2017},
which uses another classifier built on auxiliary unlabeled data.
However,
all these combination methods
rely on extra, privacy-insensitive public data,
which may not be always available.
Moreover, the aggregated prediction may not be better than
the best single
party's
prediction.

There are also
MPL methods
that do not use ensemble learning.
\citeauthor{rajkumar2012differentially}
[\citeyear{rajkumar2012differentially}]
used
stochastic gradient descent,
and
\citeauthor{Xie2017} [\citeyear{Xie2017}]
proposed
a multi-task learning method.
While these improve the performance of the previous ones based on aggregation,
they gradually lose the privacy guarantee after more and more iterations.


\vspace{-5px}
\section{Privacy-preserving Ensemble}

In this section,
we propose to improve the learning guarantee of PLR by
ensemble learning \cite{zhou2012ensemble}.
Popular examples
include bagging
\cite{breiman1996bagging}, boosting
\cite{friedman2000additive}, and stacking
\cite{wolpert1992stacked}.
Bagging
and boosting
are based on partitioning of training samples,
and use pre-defined rules
(majority or weighted voting) to combine predictions from models
trained on different partitions.
Bagging improves learning performance by reducing the variance.
However, logistic regression is a convex model and relatively stable.
Boosting, on the other hand, is useful in combining weak models to a strong one,
while logistic regression is a relatively strong classifier and often has good performance in many applications.

In this paper,
we focus on stacking
and show that its privacy-preserving version can be realized based on sample
partitioning
(Section~\ref{sec:basic})
and feature partitioning (FP)
(Section~\ref{sec:ppsg}).
However,
sample partitioning (SP)
may suffer from insufficient training samples, while
FP does not.
Besides,
FP
allows
the incorporation of feature importance to improve learning performance.

\subsection{Privacy-preserving Stacking with Sample Partitioning (SP)}
\label{sec:basic}

We first consider using stacking with SP, and
PLR is used as both the low-level and high-level models
(Algorithm~\ref{alg:simple}).
As stacking does not impose restriction on the usage of classifiers
on each partition of the training data,
a simple combination of stacking and PLR can be used to provide privacy guarantee.


\begin{algorithm}[ht]
\caption{PST-S: Privacy-preserving stacking with SP.}
\small
\begin{algorithmic}[1]
	\REQUIRE privacy budget $\epsilon$, data set $\mD$;

	\STATE partition $\mathcal{D}$ into disjoint sets
	$\mathcal{D}^l$ and $\mathcal{D}^h$, for
	training of the low-level and
	high-level models, respectively;

	\STATE partition samples in $\mathcal{D}^l$ to
	$K$ disjoint sets
	$\{\mathcal{S}_1$, $\dots$, $\mathcal{S}_K\}$;

	\FOR{$k = 1, \dots, K$}

	\STATE train PLR (Algorithm~\ref{alg:plr}) with privacy budget $\epsilon$ on $\mathcal{S}_k$,
	and obtain
	the low-level model parameter
	$\vw_k^l$;
	\ENDFOR

	\STATE construct meta-data set $\mathcal{M}^s = \{ [\sigma(\mathbf{x}^{\top} \vw_1^l)$;$\dots$;$\sigma(\mathbf{x}^{\top}\vw_K^l)], y \}$ using all samples $\{ \vx, y \} \in \mD^h$;

	\STATE train PLR (Algorithm~\ref{alg:plr}) with privacy budget $\epsilon$ on
	$\mathcal{M}^s$,
	and obtain
	the high-level model parameter
	$\vw^h$;

	\RETURN $\{ \vw_k^l \}$ and $\vw^h$.
\end{algorithmic}
\label{alg:simple}
\end{algorithm}

\begin{proposition}
\label{pr:simple}
If the regularizer $g$ is strongly convex,
Algorithm~\ref{alg:simple} provides $\epsilon$-differential privacy.
\end{proposition}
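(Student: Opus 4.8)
The plan is to reduce the claim to three already-available facts: the per-invocation guarantee of Proposition~\ref{thm:dfgua} (each call to Algorithm~\ref{alg:plr} is $\epsilon$-differentially private when $g$ is strongly convex), the invariance of differential privacy under randomized post-processing, and the elementary observation that running $\epsilon$-differentially private mechanisms on \emph{disjoint} portions of a data set, with independent internal randomness, is still $\epsilon$-differentially private (``parallel composition''). No advanced composition theorem will be needed, so the budget is not degraded.

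First I would note that Algorithm~\ref{alg:simple} splits $\mD$ into the pairwise disjoint pieces $\mathcal{D}^h$ and $\mathcal{S}_1,\dots,\mathcal{S}_K$ (the latter partitioning $\mathcal{D}^l$). Hence any $\mD_1,\mD_2$ differing in a single element differ inside exactly one of these pieces, and I would case on which one. Case~(i): the differing element lies in some $\mathcal{S}_j$. Then for every $k\neq j$ the low-level parameter $\vw_k^l$ is the PLR output on data untouched by the change, so its distribution is identical under $\mD_1$ and $\mD_2$; and $\vw_j^l$ is the PLR output on $\mathcal{S}_j$, hence $\epsilon$-differentially private by Proposition~\ref{thm:dfgua}. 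Now $\mathcal{M}^s$ is a deterministic function of $\mathcal{D}^h$ (unchanged) and $\{\vw_k^l\}_{k=1}^K$, and $\vw^h$ is produced by a further PLR call on $\mathcal{M}^s$ using fresh independent noise; therefore the full output $(\{\vw_k^l\}_k,\vw^h)$ is a randomized post-processing of $\vw_j^l$, and post-processing invariance yields the $e^{\epsilon}$ likelihood ratio required by Definition~\ref{def:privacy}.

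Case~(ii): the differing element lies in $\mathcal{D}^h$. Then all the $\vw_k^l$ are independent of $\mathcal{D}^h$ and identically distributed under $\mD_1$ and $\mD_2$; moreover changing one sample $\{\vx,y\}\in\mathcal{D}^h$ alters exactly one meta-sample $[\sigma(\vx^\top\vw_1^l);\dots;\sigma(\vx^\top\vw_K^l)]$ of $\mathcal{M}^s$, so the two meta-data sets again differ by one element. Consequently $\vw^h$, the PLR output on $\mathcal{M}^s$, is $\epsilon$-differentially private by Proposition~\ref{thm:dfgua}, and conditioning on the identically-distributed low-level parameters preserves this. In both cases $\pr(M(\mD_1)=t)\le e^{\epsilon}\pr(M(\mD_2)=t)$ for every output $t$ of the mechanism $M$, establishing $\epsilon$-differential privacy.

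I expect the only delicate point to be the bookkeeping in these two cases, namely arguing that a single changed record perturbs the input of \emph{exactly one} PLR invocation: this hinges on the disjointness of $\mathcal{D}^h$ and the $\mathcal{S}_k$, and on the fact that the high-level training set $\mathcal{M}^s$ is built from the already-privatized parameters $\vw_k^l$ rather than from the raw samples, so it counts as post-processing. A minor preliminary to dispatch is that the meta-features $\sigma(\vx^\top\vw_k^l)\in(0,1)$ are bounded, so the rescaling step of Algorithm~\ref{alg:plr} is well defined and its guarantee applies verbatim to the high-level model.
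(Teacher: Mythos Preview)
Your proposal is correct and follows essentially the same approach as the paper: both arguments rest on the per-call guarantee of Proposition~\ref{thm:dfgua} for each PLR invocation together with the disjointness of $\mathcal{S}_1,\dots,\mathcal{S}_K,\mathcal{D}^h$, which yields $\epsilon$-differential privacy without any budget degradation. The only cosmetic difference is that the paper invokes parallel composition by citing Theorem~4 of McSherry (2009), whereas you unfold that theorem into an explicit two-case analysis with post-processing; the content is the same.
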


However,
while the high-level
model can be better than any of the single low-level models \cite{dvzeroski2004combining},
Algorithm~\ref{alg:simple} may not
perform
better than directly using PLR on the whole $\mathcal{D}$
for the following two reasons.
First,
each low-level model uses only $\mathcal{S}_k$ (step~4), which is about $\nicefrac{1}{K}$ the size of $\mD$
(assuming that the data set $\mD$ is partitioned uniformly).
This smaller sample size
may not satisfy condition (\ref{eq:thm_plr})
in Proposition~\ref{pr:plrl}.
Second,
in many real-world applications,
features are not of equal importance.
For example, for
diabetes prediction using
the
RUIJIN data set (Table~\ref{tab:ruijin-feature}),
{\sf Glu120} and {\sf Glu0},
which directly measure glucose levels in the blood,
are more relevant than features such as age and number of children.
However,
during training of the low-level models,
Algorithm~\ref{alg:simple} adds equal amounts of noise to all features.
If we can add less noise to the more important features
while keeping the same privacy guarantee,
we are likely to get better learning performance.

\subsection{Privacy-preserving Stacking with Feature Partitioning (FP)}
\label{sec:ppsg}

To address the above problems,
we propose to  partition the data based on
features instead of samples
in training the low-level models.
The proposed feature-based stacking approach is shown in Algorithm~\ref{alg:pstfp}.
Features are partitioned into
$K$ subsets, and
$\mD^l$ is split correspondingly into $K$ disjoint sets
$\{ \mathcal{F}_1, \dots, \mathcal{F}_K \}$.
Obviously, as the number of training samples is not reduced, the sample size
condition for learning performance guarantee is easier to be satisfied
(details will be established in Theorem~\ref{thm:prsplit_plr}).



\begin{algorithm}[ht]
\caption{PST-F:
	Privacy-preserving stacking with FP.}
\small
\begin{algorithmic}[1]
\REQUIRE privacy budget $\epsilon$,
data set $\mD$,
feature importance $\{q_k\}_{k=1}^K$ where $q_k \ge 0$ and $\sum_{k=1}^K{q_k} = 1$;
\footnote{---$q_k$ to partitions}

\STATE partition $\mathcal{D}$ into disjoint sets
$\mD^l$ and $\mD^h$, for training of the
low-level model
and
high-level model, respectively;

\STATE partition $\mD^l$ to
$K$ disjoint sets
$\{ \mathcal{F}_1$, $\dots$, $\mathcal{F}_K\}$
based on features;

\STATE $\epsilon' = \epsilon - \sum_{k=1}^K\log(1 + \nicefrac{q_k^2}{2n \lambda_k} + \nicefrac{q_k^4}{16n^2 \lambda_k^2})$;

\FOR{$k = 1, \dots, K$}
\STATE scale $\| \vx \| \le q_k$ for all $\vx \in \mathcal{F}_k$;
\IF{$\epsilon' >0$}
\STATE $\Delta_k = 0$ and $\epsilon_k = \epsilon'$;
\ELSE
\STATE $\Delta_k = \nicefrac{q_k^2}{4n(\exp(\nicefrac{\epsilon q_k}{4})-1)} - \lambda_k$ and $\epsilon_k = \nicefrac{\epsilon}{2}$;
\ENDIF

\STATE pick a random $\vb_k$ from $h(\vb) \propto \exp( \nicefrac{\epsilon_k\| \vb \|}{2}  ) $;

\STATE
$\vw_k^l  =
\arg\min_{\vw} \!
\nicefrac{1}{n}\sum_{\vx_i \in \mathcal{F}_k} \ell(\vw^{\top}\vx_i, y_i)
+ \nicefrac{\vb_k^{\top} \vw}{n} 
+ \nicefrac{\Delta\|\vw\|^2}{2}
+ \lambda_k g_k(\vw)$;

\ENDFOR

\STATE construct meta-data set $\mathcal{M}^f =$
$\{[\sigma(\vx_{(1)}^{\top} \vw_1^l)$,$\dots$,$\sigma(\vx_{(K)}^{\top} \vw_K^l) ]$, $y\}$
using all $\{ \vx, y \} \in \mD^h$,
where $\vx_{(k)}$ is a vector made from $\vx$  by taking features covered by $\mathcal{F}_k$;

\STATE train PLR (Algorithm~\ref{alg:plr}) with privacy budget $\epsilon$ on $\mathcal{M}^f$,
and obtain the high-level model parameter $\vw^h$;

\RETURN $\{ \vw_k^l \}$ and $\vw^h$.
\end{algorithmic}
\label{alg:pstfp}
\end{algorithm}

When the relative
importance of feature subsets is known,
Algorithm~\ref{alg:pstfp}
adds less noise to the more important features.
Specifically,
let the importance\footnote{When feature importance is not known, $q_1 = \dots = q_K = \nicefrac{1}{K}$.}
of $\mathcal{F}_k$
(with $d_k$ features)
be
$q_k$,
where $q_k \ge 0$ and $\sum_{k = 1} q_k = 1$,
and is independent with $\mathcal{D}$.
Assume that $\epsilon' > 0$ in step~6 (and thus
$\epsilon_k=\epsilon'$).
Recall from Section~\ref{sec:plr} that
$\mathbb{E}(\|\vb_k\|)
= \nicefrac{2 d_k}{\epsilon_k}
= \nicefrac{2 d_k}{\epsilon'}$.
By scaling the samples in each
$\mathcal{F}_k$ as in step~5,
the injected noise level in $\mathcal{F}_k$ is given by
$\nicefrac{\mathbb{E}(\| \vb_k \|)}{\| \vx \|} = \nicefrac{2 d_k}{\epsilon' q_k}$.
This is thus inversely proportional to
the importance $q_k$.

\begin{remark}
In the special case
where
only one feature group
has nonzero importance,
Algorithm~\ref{alg:pstfp} reduces
Algorithm~\ref{alg:plr} on that group,
and privacy is still guaranteed.
\end{remark}

Finally,
a privacy-preserving low-level logistic regression model
is obtained in step~12, and
a privacy-preserving high-level logistic regression model is obtained in step~15.
In general,
each low-level model can have its own
regularizer $g_k$
(an example is shown in Theorem~\ref{thm:plr_fs_learn}).

\noindent\textbf{1). Privacy Guarantee.}
Theorem~\ref{thm:prsplit_plr}
guarantees
privacy of Algorithm~\ref{alg:pstfp}.
Note that the proofs in \cite{Chaudhuri2011,bassily2014private} cannot be
directly used, as they consider neither stacking nor feature importance.

\begin{theorem}
\label{thm:prsplit_plr}
If all $g_k$'s are strongly convex,
Algorithm~\ref{alg:pstfp} provides $\epsilon$-differential privacy.
\end{theorem}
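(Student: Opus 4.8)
The plan is to reduce the privacy analysis of Algorithm~\ref{alg:pstfp} to two ingredients: (i) the per-group privacy guarantee inherited from the objective-perturbation argument of \cite{Chaudhuri2011} (Proposition~\ref{thm:dfgua}), adapted to the scaling $\|\vx\|\le q_k$ and the per-group budget $\epsilon_k$; and (ii) a composition argument that combines the $K$ low-level models and the high-level model. First I would fix two neighbouring data sets $\mD_1,\mD_2$ differing in one sample, and observe that the partition into $\mD^l$ and $\mD^h$, and the further partition of $\mD^l$ into $\{\mathcal{F}_1,\dots,\mathcal{F}_K\}$, are data-independent; hence the differing element lands either in exactly one group $\mathcal{F}_k$ of $\mD^l$, or in $\mD^h$. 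Since the feature importances $q_k$ are independent of $\mD$, they can be treated as fixed constants throughout.

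The core step is the single-group bound. For a fixed $k$, step~12 solves exactly the PLR objective \eqref{eq:privatelr} but with data rescaled so that $\|\vx\|\le q_k$, regularizer $\lambda_k g_k$, perturbation $\vb_k$ drawn from $h(\vb)\propto\exp(\nicefrac{\epsilon_k\|\vb\|}{2})$, and quadratic term $\nicefrac{\Delta_k\|\vw\|^2}{2}$. I would replay the \cite{Chaudhuri2011} argument: map an output $\vw_k^l$ to the unique noise vector $\vb_k$ producing it via the first-order optimality condition, bound the ratio of the two induced densities using (a) the Lipschitz/bounded-derivative properties of the logistic loss together with $\|\vx\|\le q_k$, which contributes the $\nicefrac{q_k^2}{2n\lambda_k}+\nicefrac{q_k^4}{16n^2\lambda_k^2}$ factor, and (b) the strong convexity of $g_k$, which guarantees the optimality map is a bijection with controllable Jacobian. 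When $\epsilon'>0$ this shows step~12 alone is $\epsilon_k'$-differentially private where $\log(1+\nicefrac{q_k^2}{2n\lambda_k}+\nicefrac{q_k^4}{16n^2\lambda_k^2})$ is the ``overhead''; when $\epsilon'\le 0$ the extra term $\Delta_k$ with $\epsilon_k=\nicefrac{\epsilon}{2}$ absorbs the remainder, exactly as in Algorithm~\ref{alg:plr}. The key bookkeeping is that the budget spent on group $k$ is $\epsilon_k + \log(1+\nicefrac{q_k^2}{2n\lambda_k}+\nicefrac{q_k^4}{16n^2\lambda_k^2})$.

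Next I would compose. Because the $\mathcal{F}_k$'s are disjoint feature blocks of the \emph{same} samples in $\mD^l$, changing one sample of $\mD^l$ perturbs \emph{all} $K$ low-level solutions simultaneously, so the naive argument needs the $K$-fold composition: the total budget over the low-level stage is $\sum_{k=1}^K\bigl(\epsilon_k + \log(1+\nicefrac{q_k^2}{2n\lambda_k}+\nicefrac{q_k^4}{16n^2\lambda_k^2})\bigr)$, which by the choice of $\epsilon'$ in step~3 telescopes to $\epsilon'+\sum_k\log(1+\cdots)=\epsilon$ in the $\epsilon'>0$ branch. If instead the differing sample lies in $\mD^h$, the low-level models are untouched and only step~15 — a single invocation of PLR with budget $\epsilon$ on $\mathcal{M}^f$ — is affected, giving $\epsilon$-differential privacy directly from Proposition~\ref{thm:dfgua}; note the meta-features $\sigma(\vx_{(k)}^\top\vw_k^l)$ are a fixed post-processing of $\vx$ once the $\vw_k^l$ are fixed, so no extra budget is incurred. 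Taking the worst case over where the differing element sits yields $\epsilon$-differential privacy for the whole algorithm.

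The main obstacle I anticipate is the single-group density-ratio bound in the regime where $\epsilon'\le 0$ and each $\Delta_k$ kicks in: one must verify that with the $q_k$-dependent scaling the perturbed objective is still strongly convex (so the optimality map stays bijective) and that the combined $\Delta_k$ plus $\epsilon_k=\nicefrac{\epsilon}{2}$ really does cover the privacy cost for \emph{that} group — the algebra here mirrors \cite{Chaudhuri2011} but every constant now carries a factor of $q_k$ or $q_k^2$, and these must be tracked carefully so that the per-group costs sum to exactly $\epsilon$ rather than something larger. A secondary subtlety is justifying that the composition across the $K$ disjoint-feature low-level models is genuinely additive in $\epsilon$ (rather than, say, needing a union over which block changed): this follows because a one-sample change can alter all blocks at once, so sequential composition is the right tool and the $\epsilon'$ in step~3 is precisely engineered to make the sum collapse to $\epsilon$.
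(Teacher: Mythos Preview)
Your overall architecture matches the paper's: analyse each low-level solver via the Chaudhuri--Monteleoni change-of-variables, combine across the $K$ feature blocks (all of which move when one sample in $\mD^l$ changes), and then invoke parallel composition over the disjoint split $\mD^l,\mD^h$. But the central accounting step has a genuine gap. You assert that the budget spent on group $k$ is $\epsilon_k + \log\bigl(1+\nicefrac{q_k^2}{2n\lambda_k}+\nicefrac{q_k^4}{16n^2\lambda_k^2}\bigr)$ and that these ``telescope'' to $\epsilon'+\sum_k\log(\cdots)=\epsilon$. They do not: in Algorithm~\ref{alg:pstfp} every $\epsilon_k$ equals the \emph{same} constant ($\epsilon'$ in one branch, $\epsilon/2$ in the other), so $\sum_k\epsilon_k=K\epsilon'$, which would blow the budget by a factor of $K$.

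The missing idea is that the rescaling $\|\vx\|\le q_k$ in step~5 does more than insert $q_k^2,q_k^4$ into the Jacobian factor: it also shrinks the sensitivity of the $\vw\mapsto\vb$ map. If $\vb_k,\vb_k'$ are the noise vectors making two neighbouring data sets yield the same $\vw_k^l$, the first-order condition gives $\|\vb_k-\vb_k'\|\le 2q_k$ (not $2$), so the noise-density ratio is at most $\exp(q_k\epsilon_k)$, not $\exp(\epsilon_k)$. Taking the product over $k$ then gives $\exp\bigl(\epsilon'\sum_k q_k\bigr)=\exp(\epsilon')$, precisely because the importances are normalised to $\sum_k q_k=1$; this constraint is what makes the definition of $\epsilon'$ in step~3 close the books. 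The same mechanism operates in the $\epsilon'\le0$ branch: $\Delta_k$ is engineered so the Jacobian factor for group $k$ equals $\exp(q_k\epsilon/2)$, and again the product collapses via $\sum_k q_k=1$. The paper isolates exactly this computation (the bound $\|\vb\|-\|\vb'\|\le 2q$ under $\|\vx\|\le q$) as a separate lemma before multiplying the $K$ density ratios. Without that $q_k$ factor in the noise-ratio term, your composition does not sum to $\epsilon$ and the proof does not go through.
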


\noindent\textbf{2). Learning Performance Guarantee.}
Analogous to Proposition~\ref{thm:dfgua},
the following bounds the learning performance of each low-level model.

\begin{theorem}
\label{thm:plr_fs_learn}
$g_k = \nicefrac{1}{2}\|\cdot - \mathbf{u}_k\|^2$, where $\mathbf{u}_k$ is any constant
vector,
and $\mathbf{v}_k$ is a
reference model parameter.
Let $a_k = q_k \|\mathbf{v}_k\|$.
given $\delta > 0$ and $\epsilon_g > 0$,
there exists a constant
$C_1$ such that when
\begin{eqnarray} \label{eq:split_bound}
n \! > \!  C_1 \max
\left(
\nicefrac{a_k^2 \log(\nicefrac{1}{\delta})}{\epsilon_g^2},
\nicefrac{ d\log(\nicefrac{d}{K\delta})  a_k}{ q_k K \epsilon_g \epsilon},
\nicefrac{a_k^2}{\epsilon_g\epsilon}
\right),
\end{eqnarray}
$\vw_k^l$ from Algorithm~\ref{alg:pstfp} satisfies
$\pr [ L( \vw_k^l, P ) \le L( \mathbf{v}_k, P ) + \epsilon_g ] \ge 1 - \delta$.
\end{theorem}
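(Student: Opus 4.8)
The plan is to reduce the claim to Proposition~\ref{pr:plrl} by a change of variables that absorbs the per-partition rescaling $\|\vx\|\le q_k$ (step~5 of Algorithm~\ref{alg:pstfp}) into the model parameter, thereby turning the low-level subproblem into an ordinary PLR instance. For the samples in $\mathcal{F}_k$ set $\tilde\vx_i=\vx_i/q_k$, so $\|\tilde\vx_i\|\le 1$, and for any parameter $\vw$ set $\tilde\vw=q_k\vw$; then $\vw^{\top}\vx_i=\tilde\vw^{\top}\tilde\vx_i$, so the logistic loss is unchanged and, writing $\tilde P$ for the law of $\vx/q_k$ under $P$, one has $L(\vw;P)=L(\tilde\vw;\tilde P)$. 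In particular, with $\tilde{\mathbf v}_k=q_k\mathbf v_k$ (so $\|\tilde{\mathbf v}_k\|=q_k\|\mathbf v_k\|=a_k$) the event $\{L(\vw_k^l;P)\le L(\mathbf v_k;P)+\epsilon_g\}$ coincides with $\{L(\tilde\vw_k^l;\tilde P)\le L(\tilde{\mathbf v}_k;\tilde P)+\epsilon_g\}$. It then remains to check that, expressed in the variable $\tilde\vw$, the subproblem solved in step~12 is exactly the program~\eqref{eq:privatelr} run by Algorithm~\ref{alg:plr}: the regularizer $\tfrac{\lambda_k}{2}\|\vw-\mathbf u_k\|^2$ becomes $\tfrac{\tilde\lambda_k}{2}\|\tilde\vw-q_k\mathbf u_k\|^2$ with $\tilde\lambda_k=\lambda_k/q_k^2$; the term $\tfrac{\Delta_k}{2}\|\vw\|^2$ becomes $\tfrac12(\Delta_k/q_k^2)\|\tilde\vw\|^2$ and $\Delta_k/q_k^2=(4n(e^{q_k\epsilon/4}-1))^{-1}-\tilde\lambda_k$ is precisely the perturbation Algorithm~\ref{alg:plr} prescribes for budget $q_k\epsilon$; and the noise term $\vb_k^{\top}\vw/n$ becomes $(\vb_k/q_k)^{\top}\tilde\vw/n$, where $\vb_k/q_k$ has density $\propto\exp(-q_k\epsilon_k\|\cdot\|/2)$ and hence $\mathbb E\|\vb_k/q_k\|=2d_k/(q_k\epsilon_k)$, i.e.\ exactly the PLR noise for a $d_k$-dimensional problem with budget $q_k\epsilon_k$.

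Having identified $\vw_k^l$ (in $\tilde\vw$ coordinates) with the output of Algorithm~\ref{alg:plr} on unit-norm data in dimension $d_k$, with privacy budget $q_k\epsilon_k$, reference $\tilde{\mathbf v}_k$, and regularizer $\tfrac12\|\cdot-q_k\mathbf u_k\|^2$, I would apply Proposition~\ref{pr:plrl}. Its proof uses $g=\tfrac12\|\cdot\|^2$ only for $1$-strong convexity and to control the ``regularization cost'' $\lambda\,g(\mathbf v)$ that is traded off against the optimization and noise errors; replacing $g$ by $\tfrac12\|\cdot-q_k\mathbf u_k\|^2$ preserves strong convexity and merely changes that cost to $\tfrac{\tilde\lambda_k}{2}\|\tilde{\mathbf v}_k-q_k\mathbf u_k\|^2$, so the same three-regime optimization over $\tilde\lambda_k$ goes through unchanged. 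This gives the sample-size condition~\eqref{eq:thm_plr} with $\|\mathbf v\|,d,\epsilon$ replaced by $a_k,d_k,q_k\epsilon_k$ respectively. To finish I would simplify using the construction in steps~3 and~6--9 of Algorithm~\ref{alg:pstfp}, which makes $\epsilon_k$ at least a constant fraction of $\epsilon$ in the regime where~\eqref{eq:split_bound} holds, together with $d_k\le d/K$ and $\log(d_k/\delta)\le\log(d/(K\delta))$ for a uniform feature partition; substituting these into the transformed bound and folding all numerical factors into $C_1$ yields~\eqref{eq:split_bound}.

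The main obstacle is not a single hard estimate but the bookkeeping in the first step: Proposition~\ref{pr:plrl} is available only as a black box for the unit-norm, origin-centred, budget-$\epsilon$ setting, so one must verify carefully that the rescaled subproblem is \emph{literally} the program of Algorithm~\ref{alg:plr} --- in particular that the pushed-forward noise $\vb_k/q_k$ has the correct law, that the $\Delta_k$ branch transforms into the $\Delta$ branch with budget $q_k\epsilon$, and that recentring the regularizer at $q_k\mathbf u_k$ does not affect the analysis. A secondary point is that the effective per-partition budget $q_k\epsilon_k$ depends on $n$ through $\epsilon_k$ (step~3); this is dealt with, as above, by the lower bound on $\epsilon_k$, after which the $n$-dependence drops out and the three terms of~\eqref{eq:split_bound} follow from the three terms of~\eqref{eq:thm_plr} under the parameter substitution.
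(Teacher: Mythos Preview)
Your proposal is correct and arrives at the same bound, but the route differs from the paper's. You perform a change of variables $\tilde\vx=\vx/q_k$, $\tilde\vw=q_k\vw$ upfront so that the $k$th low-level problem becomes (almost) an instance of Algorithm~\ref{alg:plr} on unit-norm data, and then invoke Proposition~\ref{pr:plrl} with the substitutions $\|\vv\|\to a_k$, $d\to d_k$, $\epsilon\to q_k\epsilon_k$. The paper does not change variables; it re-runs the internal analysis of \cite{Chaudhuri2011} directly on the scaled data, writing the excess-risk decomposition of their Theorem~18, applying their Lemma~19 (noise-induced suboptimality) with the $q_k$ factor carried through, and then the Sridharan--Shalev-Shwartz--Srebro generalization bound, before optimizing $\lambda_k=\epsilon_g/\|\vv_k\|^2$ and solving for $n$. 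Both arguments ultimately rest on the same two lemmas and the same lower bound $\epsilon_k\ge\epsilon/2$; yours packages the $q_k$-bookkeeping into a single rescaling, which is cleaner, while the paper's version makes the dependence on each ingredient explicit.

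One point worth sharpening in your write-up: the reduction to Algorithm~\ref{alg:plr} is exact only in the $\Delta_k>0$ branch (where indeed $\hat\epsilon=q_k\epsilon$ works). In the $\Delta_k=0$ branch the noise parameter is $q_k\epsilon'$ with $\epsilon'$ defined by the \emph{sum over all partitions} in step~3, so no single input budget $\hat\epsilon$ makes the rescaled problem literally Algorithm~\ref{alg:plr}. You handle this correctly by observing that Proposition~\ref{pr:plrl}'s proof depends on the input $\epsilon$ only through the noise scale, and that $\epsilon_k\ge\epsilon/2$ once $n$ is large enough; but this means you are not applying Proposition~\ref{pr:plrl} as a black box, only its proof---which is precisely what the paper does without the preliminary rescaling. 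The same caveat applies to the shifted regularizer $g_k=\tfrac12\|\cdot-\vu_k\|^2$: Proposition~\ref{pr:plrl} as stated requires $g=\tfrac12\|\cdot\|^2$, and accommodating the shift again forces you inside its proof (the paper glosses over this as well).
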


\begin{remark}
When $K \! = \! 1$
(a single low-level model trained with all features) and
$\mathbf{u}_k\!=\!\mathbf{0}$,
Theorem~\ref{thm:plr_fs_learn}
reduces to Proposition~\ref{pr:plrl}.
\end{remark}

Note that, to keep the same bound $L(\vv_k, P) + \epsilon_g$, since $\vx$s' are scaled by $q_k$,
$\vv_k$ should be scaled by  $\nicefrac{1}{q_k}$, so
 $\mathbb{E}(a_k) = \mathbb{E}(q_k \|\mathbf{v}_k\|) $ remains the same as  $q_k$ changes.
Thus,
Theorem~\ref{thm:plr_fs_learn} shows that low-level models on more important features can indeed learn better,
if these features are assigned with larger $q_k$.
Since
stacking can have better performance than any single model \cite{ting1999issues,dvzeroski2004combining}
and Theorem~\ref{thm:plr_fs_learn} can offer better learning guarantee than
Proposition~\ref{pr:plrl},
Algorithm~\ref{alg:pstfp} can have better performance than Algorithm~\ref{alg:plr}.
Finally,
compared with Proposition~\ref{thm:dfgua},
$g_k$ in
theorem~\ref{thm:plr_fs_learn} is more flexible in allowing
an extra $\mathbf{u}_k$.
We will show in Section~\ref{sec:xfer} that this is useful for transfer learning.

Since the learning performance
of stacking itself is still an open issue \cite{ting1999issues}, we leave the
guarantee for the whole
Algorithm~\ref{alg:pstfp}
as future work.
A potential problem with FP is that
possible correlations among feature subsets
can no longer be utilized.
However,
as the high-level model can combine information from various low-level models,
empirical results
in Section~\ref{sec:bm-data set} show  that
this is not problematic unless $K$ is very large.

\subsection{Application to Transfer Learning}
\label{sec:xfer}

Transfer learning \cite{Pan2010} is a powerful and promising method to extract
useful knowledge from a source domain to a target domain.
A popular transfer learning approach
is
hypothesis transfer learning (HTL)
\cite{Kuzborskij2013},
which encourages
the hypothesis learned
in the target domain to be similar with that in the source domain.
For application to \eqref{eq:reg_problem},
HTL adds an extra regularizer as:
\begin{align}
\min_{ \vw }
\sum\nolimits_{\vx_i \in \mD_{\text{tgt}}}
\ell(\vw^{\top}\vx_i, y_i)
\!+\! \lambda g(\vw)
\!+\! \nicefrac{\eta}{2} \| \vw \!-\! \vw_{\text{src}} \|^2.
\end{align}
Here, $\eta$ is a hyperparameter,
$\mD_{\text{tgt}}$ is the
target domain
data, and
$\vw_{\text{src}}$ is obtained from the source domain.
Algorithm~\ref{alg:ptrans}
shows how
PST-F can be extended with HTL
using
privacy budgets
$\epsilon_{\text{src}}$ and $\epsilon_{\text{tgt}}$
for the source and target domains, respectively.
The same
feature
partitioning
is used
on both the source and target data.
PLR is trained on each source domain data subset to obtain
$(\vw_{\text{src}})_k$ (steps~2-4).
This is then transferred to the target domain
using PST-F with $g_k(\vw) \! = \! \frac{1}{2}\| \vw \! - \! (\vw_{\text{src}})_k \|^2$ (step~5).

\begin{algorithm}[ht]
	\caption{PST-H: Privacy-preserving stacking with HTL.}
	\small
	\begin{algorithmic}[1]
		\REQUIRE source data sets $\mD_{\text{src}}$, target data set $\mD_{\text{tgt}}$,  and corresponding
		privacy budgets $\epsilon_{\text{src}}$ and $\epsilon_{\text{tgt}}$, respectively.
		\\
		{\bf (source domain processing)}
		\STATE partition $\mD_{\text{src}}$
		to $K$ disjoint sets $\left\lbrace \mathcal{F}_1, \dots, \mathcal{F}_K \right\rbrace $
		based on features;

		\FOR{$k = 1, \dots, K$}
		\STATE
		train PLR with privacy budget $\epsilon_{\text{src}}$ on $\mathcal{F}_k$
		and obtain $(\vw_{\text{src}})_k$;
		\ENDFOR

		{\bf (target domain processing)}

		\STATE obtain $\left\lbrace (\vw_{\text{tgt}})_k^l \right\rbrace $ and $\vw_{\text{tgt}}^h$ from PST-F (Algorithm~\ref{alg:pstfp}) by
		taking $g_k(\vw) \! = \! \nicefrac{1}{2}\| \vw \! - \! (\vw_{\text{src}})_k \|^2$
		and privacy budget $\epsilon_{\text{tgt}}$ on $\mD_{\text{tgt}}$;

		\RETURN $\left\lbrace (\vw_{\text{src}})_k \right\rbrace$ for source domain,
		$\left\lbrace (\vw_{\text{tgt}})_k^l \right\rbrace $ and $\vw_{\text{tgt}}^h$ for target domain.
	\end{algorithmic}
	\label{alg:ptrans}
\end{algorithm}

The following provides privacy guarantees on both the source and target domains.

\begin{corollary}
\label{cor:tansfer}
Algorithm~\ref{alg:ptrans} provides
$\epsilon_{\text{src}}$- and $\epsilon_{\text{tgt}}$-differential privacy guarantees
for the source and target domains.
\end{corollary}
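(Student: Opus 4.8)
The plan is to split the claim into two independent assertions — that the full output tuple is $\epsilon_{\text{src}}$-differentially private with respect to $\mD_{\text{src}}$, and that it is $\epsilon_{\text{tgt}}$-differentially private with respect to $\mD_{\text{tgt}}$ — and to reduce each to Theorem~\ref{thm:prsplit_plr} together with the post-processing immunity of differential privacy (if $M$ is $\epsilon$-DP, then $F\circ M$ is $\epsilon$-DP for any, possibly randomized, $F$ acting only on the output of $M$). The only genuinely new point is that in the target domain the regularizers $g_k(\vw)=\tfrac12\|\vw-(\vw_{\text{src}})_k\|^2$ are \emph{data-dependent} (through $\mD_{\text{src}}$), so Theorem~\ref{thm:prsplit_plr} has to be invoked conditionally; everything else is bookkeeping.

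\emph{Source domain.} Steps~1--4 of Algorithm~\ref{alg:ptrans} are precisely the low-level feature-partitioned PLR training inside Algorithm~\ref{alg:pstfp}, run on $\mD_{\text{src}}$ with budget $\epsilon_{\text{src}}$ and the default choice $g_k=\tfrac12\|\cdot\|^2$ (i.e. $\mathbf{u}_k=\mathbf{0}$), which is $1$-strongly convex. Hence, by the argument establishing Theorem~\ref{thm:prsplit_plr} — in particular the feature rescaling $\|\vx\|\le q_k$ with $\sum_k q_k=1$, which is exactly what keeps the composition over the $K$ subsets at total budget $\epsilon_{\text{src}}$ rather than $K\epsilon_{\text{src}}$ — the release of $\{(\vw_{\text{src}})_k\}_{k=1}^K$ is $\epsilon_{\text{src}}$-DP with respect to $\mD_{\text{src}}$. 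The target-domain quantities $\{(\vw_{\text{tgt}})_k^l\}$ and $\vw_{\text{tgt}}^h$ are computed from $\{(\vw_{\text{src}})_k\}$ and $\mD_{\text{tgt}}$ alone, and $\mD_{\text{tgt}}$ carries no information about $\mD_{\text{src}}$; thus producing them is a randomized post-processing of $\{(\vw_{\text{src}})_k\}$ that never re-examines $\mD_{\text{src}}$, so by post-processing immunity the entire output tuple is $\epsilon_{\text{src}}$-DP with respect to $\mD_{\text{src}}$.

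\emph{Target domain.} Step~5 runs PST-F (Algorithm~\ref{alg:pstfp}) on $\mD_{\text{tgt}}$ with budget $\epsilon_{\text{tgt}}$ and regularizers $g_k(\vw)=\tfrac12\|\vw-(\vw_{\text{src}})_k\|^2$, each of which is $1$-strongly convex. Condition on $\mD_{\text{src}}$: then every $(\vw_{\text{src}})_k$ is a fixed vector, so $g_k$ is a fixed strongly convex function (the constant $\mathbf{u}_k=(\vw_{\text{src}})_k$ of Theorem~\ref{thm:plr_fs_learn}), and Theorem~\ref{thm:prsplit_plr} applies verbatim, giving that $\{(\vw_{\text{tgt}})_k^l\},\vw_{\text{tgt}}^h$ is $\epsilon_{\text{tgt}}$-DP with respect to $\mD_{\text{tgt}}$ for every realization of $\mD_{\text{src}}$. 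Since $\mD_{\text{src}}$ is independent of $\mD_{\text{tgt}}$, integrating the defining inequality $\pr(M(\mD_1)=t)\le e^{\epsilon_{\text{tgt}}}\pr(M(\mD_2)=t)$ over the distribution of $\mD_{\text{src}}$ removes the conditioning. Finally $\{(\vw_{\text{src}})_k\}$ does not depend on $\mD_{\text{tgt}}$ at all, so adjoining it leaves the output $\epsilon_{\text{tgt}}$-DP with respect to $\mD_{\text{tgt}}$.

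\textbf{Main obstacle.} There is no deep difficulty, and indeed the result is labelled a corollary; the one place needing care is the data-dependence of $g_k$ in the target step. One must phrase the application of Theorem~\ref{thm:prsplit_plr} as ``for each fixed realization of $\{(\vw_{\text{src}})_k\}$'' and then average over $\mD_{\text{src}}$, exploiting that the source and target data belong to distinct domains, so perturbing one element of $\mD_{\text{tgt}}$ leaves $\mD_{\text{src}}$ — hence the regularizers — untouched. A secondary point to spell out is that step~3's ``budget $\epsilon_{\text{src}}$ on $\mathcal{F}_k$'' is consumed exactly once in total across $k$; this is inherited directly from the proof of Theorem~\ref{thm:prsplit_plr}.
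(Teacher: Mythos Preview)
Your argument is correct and is exactly the natural derivation the paper has in mind; the paper gives no separate proof of this corollary, treating it as immediate from Theorem~\ref{thm:prsplit_plr} (for the target step) together with Proposition~\ref{thm:dfgua}/Lemma-level reasoning (for the source step) and post-processing.

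One phrasing point worth tightening: in the target-domain step you write ``condition on $\mD_{\text{src}}$: then every $(\vw_{\text{src}})_k$ is a fixed vector'' and later ``integrate over the distribution of $\mD_{\text{src}}$.'' In the differential-privacy setting $\mD_{\text{src}}$ is a fixed dataset, not a random variable, and conditioning on it does \emph{not} fix $(\vw_{\text{src}})_k$, which is still random through the source-side noise $\vb_k$. The clean statement is to condition on the realization of the source-side randomness (equivalently, on $\{(\vw_{\text{src}})_k\}$), apply Theorem~\ref{thm:prsplit_plr} with those fixed centers $\mathbf{u}_k=(\vw_{\text{src}})_k$, and then average over that source randomness; since this randomness is independent of both $\mD_{\text{tgt}}$ and the target-side noise, the $\epsilon_{\text{tgt}}$-DP inequality survives the averaging. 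This is purely a wording fix --- your structure (post-processing for the source claim, conditioning-then-averaging for the target claim) is the right one.
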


Recently,
privacy-preserving HTL is also proposed in \cite{wangdifferentially}.
However,
it does not consider stacking
and ignores feature importance.

\vspace{-5px}
\section{Experiments}

Experiments are performed on a server with Intel Xeon E5 CPU and 250G memory.
All codes are in Python.

\vspace{-5px}
\subsection{Benchmark Datasets}
\label{sec:bm-data set}

Experiments are performed on
two
popular benchmark
data sets
for evaluating privacy-preserving learning algorithms \cite{shokri2015privacy,Papernot2017,wangdifferentially}:
MNIST
\cite{lecun1998gradient}
and
NEWS20 \cite{lang1995newsweeder}
(Table~\ref{tab:summary_bm}).
The MNIST data set
contains
images of handwritten digits.
Here, we use the digits $0$ and $8$.
We randomly select 5000 samples.
$60\%$ of them are used for training
(with $\nicefrac{1}{3}$ of this used for validation),
and the remaining $20\%$ for testing.
The NEWS20  data set
is a collection of newsgroup documents.
Documents
belonging to the topic ``sci" are taken as positive samples, while those in the topic ``talk" are taken as negative.
Finally,
we use 
\footnote{	Here we use PCA for ablation study.
	However,
	note that the importance scores should be obtained from side information independent from the data or from experts' opinions (as in diabetes example).
	Otherwise, $\epsilon$-differential privacy will not be guaranteed. }
PCA to reduce the feature dimensionality to $100$,
as original dimensionality for MINIST/NEWS20 is too high for differentially private algorithms to handle as the noise will be extremely large.

\begin{table}[ht]
	\small
	\centering
	\vspace{-5px}
	\begin{tabular}{ccc|ccc}
		\hline
		 \multicolumn{3}{c|}{MNIST}   &  \multicolumn{3}{c}{NEWS20}   \\ \hline
		\#train & \#test & \#features & \#train & \#test & \#features \\
		 3000   &  2000   &    100     &  4321 &  643 &    100     \\ \hline
	\end{tabular}
	\vspace{-5px}
	\caption{Summary of the MNIST and NEWS20 data sets. }
	\vspace{-10px}
	\label{tab:summary_bm}
\end{table}

The following algorithms are compared:
(i) PLR, which applies Algorithm~\ref{alg:plr} on the training data;
(ii) PST-S: Algorithm~\ref{alg:simple}, based on SP; and
(iii) PST-F: Algorithm~\ref{alg:pstfp}, based on FP.
We use $K=5$ and
$50\%$ of the data for $\mD^l$ and the remaining
for
$\mD^h$.
Two
PST-F
variants are compared:
PST-F(U),
with random FP and equal feature importance.
And
PST-F(W),
with partitioning based on the PCA feature scores;
and the importance of the $k$th group $\mathcal{F}_k$ is
\begin{eqnarray}
q_k = \nicefrac{\sum\nolimits_{i: f_i \in \mathcal{F}_k} v_i}{\sum\nolimits_{j:f_j \in \mathcal{D}^l} v_j},
\label{eq:weights}
\end{eqnarray}
where $v_i$ is the variance of the $i$th feature $f_i$.
Gradient perturbation is worse than 
objective perturbation in logistic regression \cite{bassily2014private},
thus is not compared.

The area-under-the-ROC-curve (AUC) \cite{Hanley1983}
on the testing set
is used
for performance evaluation.
Hyper-parameters are tuned using the validation set.
To reduce statistical variations,
the experiment
is repeated
$10$ times, and the results averaged.

\noindent
\textbf{1). Varying Privacy Budget $\epsilon$.}
Figure~\ref{fig:bmeps} shows
the testing AUC's
when the privacy budget $\epsilon$ is varied.
As can be seen, the AUCs  for all methods
improve when the privacy requirement is relaxed ($\epsilon$ is large and less noise is added).
Moreover,
PST-S can be inferior to PLR,
due to insufficient training samples caused by SP.
Both PST-F(W) and PST-F(U) have better AUCs than PST-S and PLR.
In particular,
PST-F(W) is the best
as it can utilize feature importance.
Since PST-S
is inferior to PST-F(U),
we only consider PST-F(U) in the following experiments.

\begin{figure}[ht]
	\centering
	\subfigure[MNIST.]
	{\includegraphics[width=\figwidth\columnwidth]{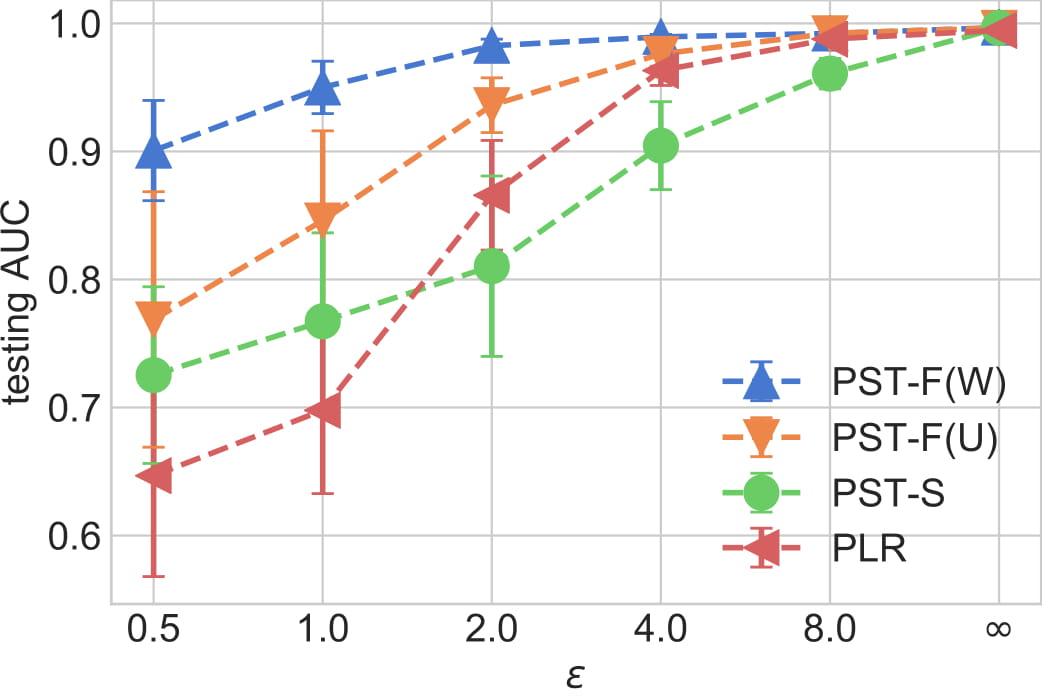}}
	\subfigure[NEWS20.]
	{\includegraphics[width=\figwidth\columnwidth]{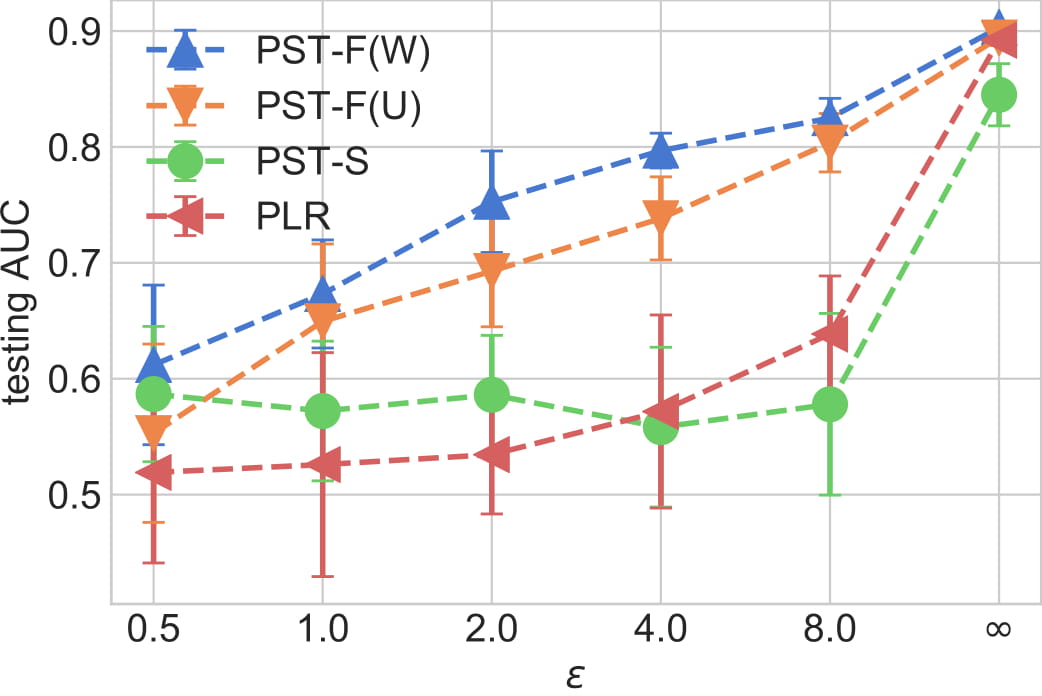}}
	\vspace{-10px}
	\caption{Testing AUC vs $\epsilon$.
Here, ``$\infty$" corresponds to the non-privacy-preserving version of the
		corresponding algorithms.}
	\label{fig:bmeps}
\end{figure}

\noindent
\textbf{2). Varying Number of Partitions $K$.}
In this  experiment,
we fix $\epsilon = 1$,
and vary $K$.
As can be seen
from Figure~\ref{fig:s-k},
when $K$ is very small, ensemble learning is not effective.
When $K$ is too large,
a lot of feature correlation information
is lost and the testing AUC also decreases.


\begin{figure}[ht]
	\centering
	\vspace{-5px}
	\subfigure[MNIST.]
	{\includegraphics[width=\figwidth\columnwidth]{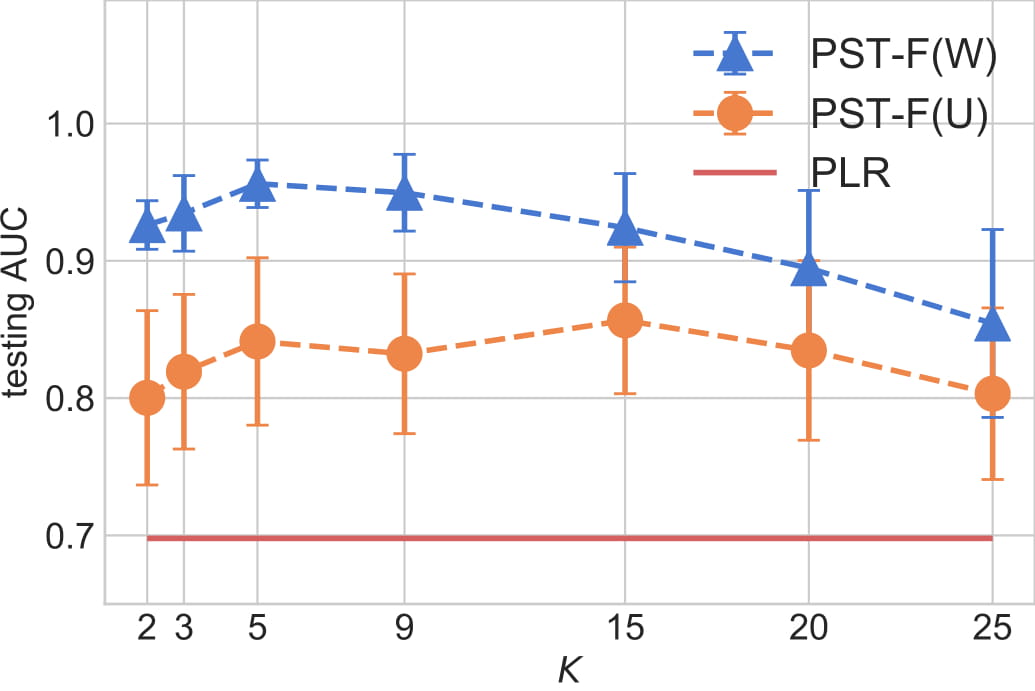}}
	\subfigure[NEWS20.]
	{\includegraphics[width=\figwidth\columnwidth]{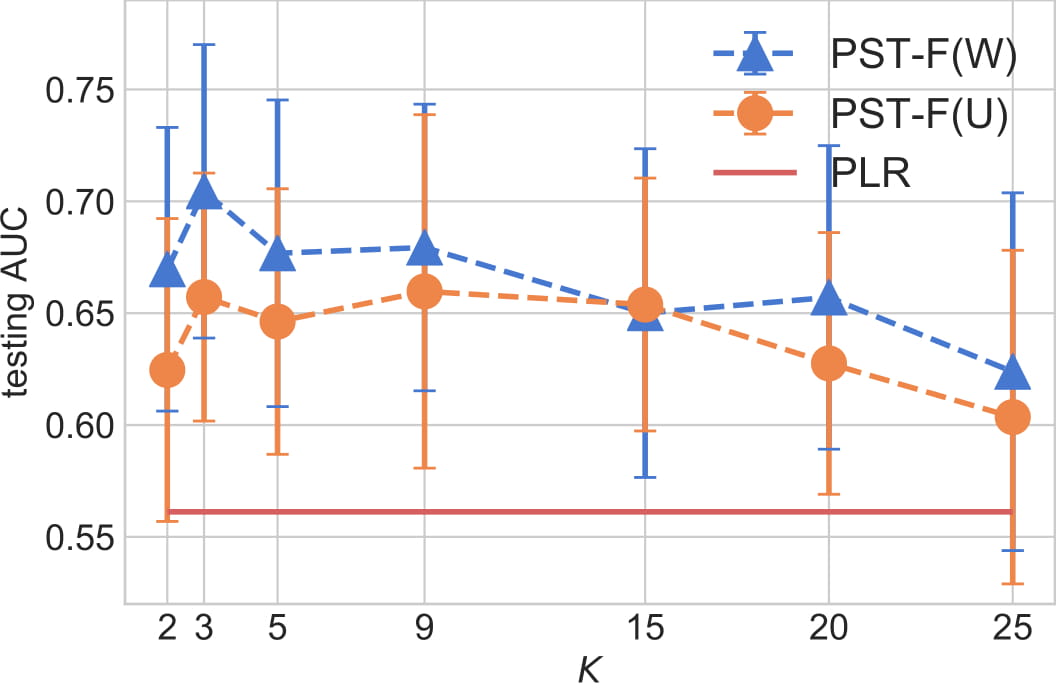}}
	\vspace{-10px}
	\caption{Testing AUC at different $K$'s.}
	\label{fig:s-k}
	\vspace{-10px}
\end{figure}

\noindent
\textbf{3). Changing the Feature Importance.}
In the above experiments,
feature importance is defined based on the variance from PCA.
Here,
we show how feature importance influences prediction performance.
In real-world applications,
we may not know the exact importance of features.
Thus, we
replace variance $v_i$ by
the $i$th power of $\alpha$
($\alpha^i$), where $\alpha$ is a positive constant,
and use \eqref{eq:weights} for assigning weights.
Note that
when $\alpha \! < \! 1$,
more importance features have larger weights;
and
vice versa
when $\alpha \! > \! 1$.
Note that PST-F(W) does not reduce to PST-F(U) when $\alpha = 1$,
as more important features are still grouped together.

\begin{table*}[ht]
\centering
\scalebox{0.8}
{
\begin{tabular}{l|cccccccc}
	\hline
	branch\#                          &            1             &            2             &            3             &            4             &             5             &            6             &             7             &            8              \\ \hline
	PST-H(W)                          & \textbf{0.747$\pm$0.032} & \textbf{0.736$\pm$0.032} & \textbf{0.740$\pm$0.040} & \textbf{0.714$\pm$0.040} & \textbf{0.766$\pm$0.039}  & \textbf{0.707$\pm$0.017} & \textbf{0.721$\pm$0.0464} & \textbf{0.753$\pm$0.042}  \\
	PST-H(U)                          &     0.678$\pm$0.049      & \textbf{0.724$\pm$0.037} &     0.652$\pm$0.103      & \textbf{0.708$\pm$0.033} &      0.653$\pm$0.070      &     0.663$\pm$0.036      & \textbf{0.682$\pm$0.0336} &     0.692$\pm$0.044       \\
	PPHTL                             &     0.602$\pm$0.085      &    {0.608$\pm$0.078}     &     0.528$\pm$0.062      &    {0.563$\pm$0.067}     &      0.577$\pm$0.075      &    {0.601$\pm$0.031}     &    {0.580$\pm$0.0708}     &     0.583$\pm$0.056       \\
	PLR(target)                       &     0.548$\pm$0.088      &    {0.620$\pm$0.055}     &    {0.636$\pm$0.046}     &    {0.579$\pm$0.075}     &      0.533$\pm$0.058      &    {0.613$\pm$0.035}     &    {0.561$\pm$0.0764}     &     0.584$\pm$0.045       \\ \hline
	branch\#                          &            9             &            10            &            11            &            12            &            13             &            14            &            15             &            16             \\ \hline
	PST-H(W)                          & \textbf{0.701$\pm$0.023} & \textbf{0.698$\pm$0.036} & \textbf{0.736$\pm$0.046} & \textbf{0.738$\pm$0.045} & \textbf{0.746$\pm$0.0520} & \textbf{0.661$\pm$0.094} & \textbf{0.697$\pm$0.023}  & \textbf{0.604$\pm$0.012}  \\
	PST-H(U)                          &     0.635$\pm$0.026      &     0.644$\pm$0.050      &     0.635$\pm$0.054      &     0.645$\pm$0.061      & \textbf{0.718$\pm$0.0647} & \textbf{0.644$\pm$0.044} &      0.647$\pm$0.061      &     0.567$\pm$0.036       \\
	PPHTL                             &     0.547$\pm$0.066      &     0.517$\pm$0.075      &     0.565$\pm$0.059      &     0.547$\pm$0.089      &     0.592$\pm$0.0806      &    {0.615$\pm$0.071}     &      0.558$\pm$0.065      &     0.524$\pm$0.027       \\
	PLR(target)                       &     0.515$\pm$0.065      &     0.555$\pm$0.061      &     0.553$\pm$0.066      &     0.520$\pm$0.088      &     0.619$\pm$0.0701      &    {0.563$\pm$0.026}     &      0.558$\pm$0.060      &      0.517$\pm$0.053      \\ \hline
\end{tabular}
}
\vspace{-5px}
\caption{Testing AUC on all branches of RUIJIN data set.
	The best and comparable results according to pair-wise 95\% significance test are high-lighted.
	Testing AUC of PLR on main center is 0.668$\pm$0.026.}
\label{tab:RUIJIN_performance}
\vspace{-10px}
\end{table*}

%



Figure~\ref{fig:s-r} shows the testing AUCs at different $\alpha$'s.
As can be seen, with proper assigned weights
(i.e., $\alpha < 1$ and more important features have larger $q_k$'s),
the testing AUC can get higher.
If less important features are more valued,
the testing AUC decreases
and may not be better than PST-F(U),
which uses uniform weights.
Moreover,
we see that PST-F(W) is not sensitive to the weights
once they are properly assigned.

\begin{figure}[ht]
	\centering
	\vspace{-5px}
	\subfigure[MNIST.]
	{\includegraphics[width=\figwidth\columnwidth]{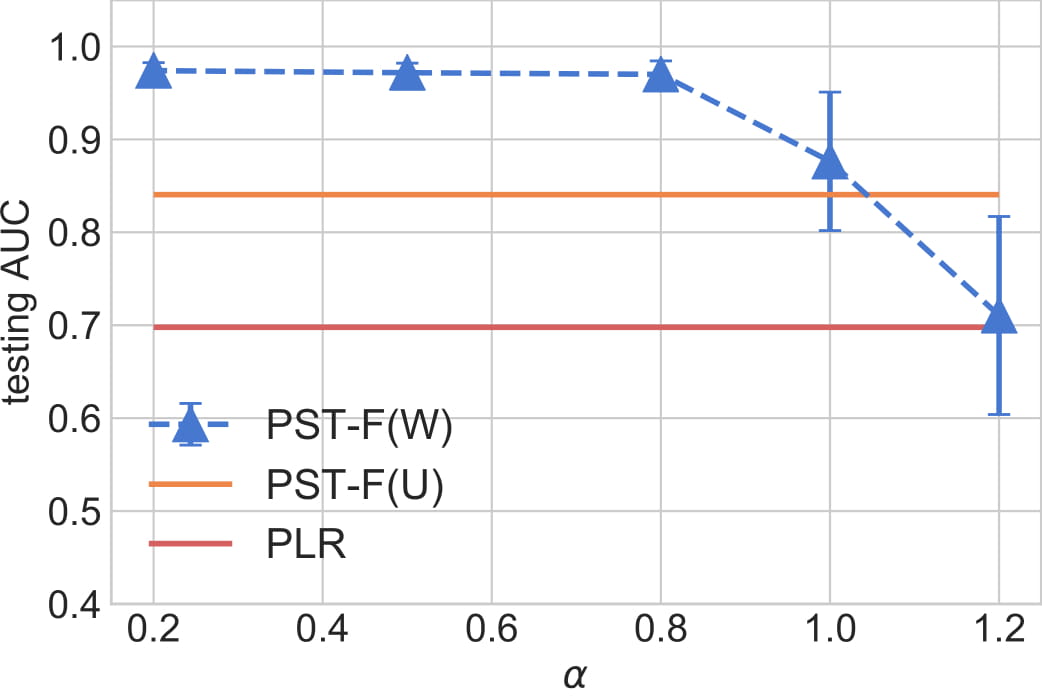}}
	\subfigure[NEWS20.]
	{\includegraphics[width=\figwidth\columnwidth]{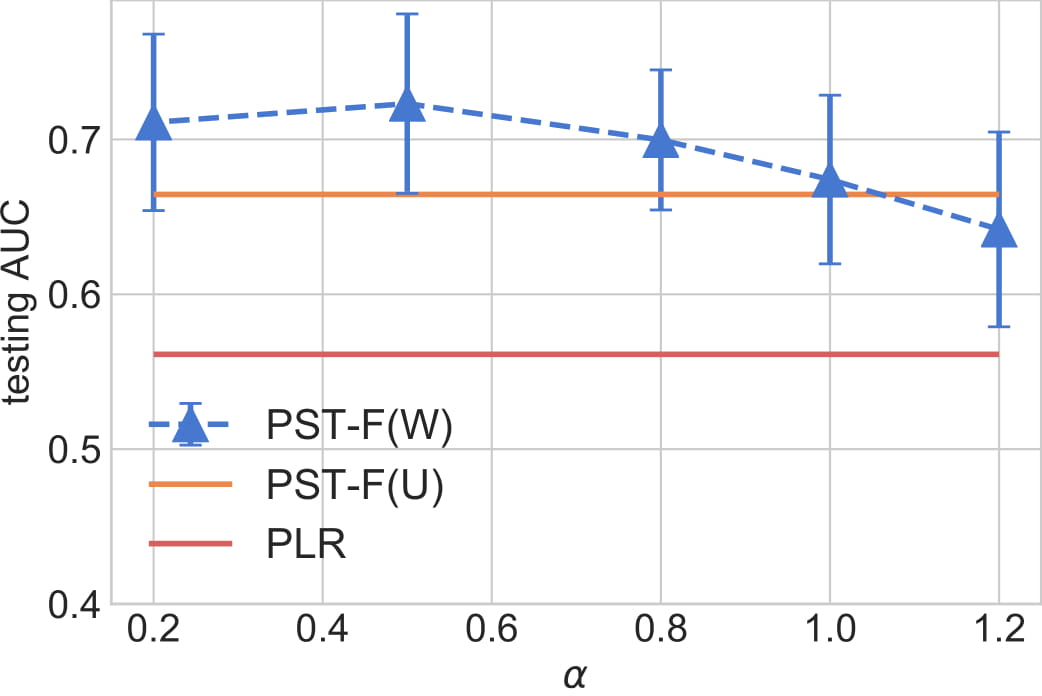}}
	\vspace{-10px}
	\caption{Testing AUC with different feature importance settings.}
	\label{fig:s-r}
	\vspace{-5px}
\end{figure}

\noindent
\textbf{4). Choice of High-Level Model.}
In this section, we compare
different high-level models in
combining predictions from the low-level models.
The following methods are compared:
(i) major voting (C-mv)
from low-level models;
(ii) weighted major voting (C-wmv), which uses $\{q_k\}$ as the weights;
and (iii) by a high-level model in PST-F (denoted ``C-hl'').
Figure~\ref{fig:hlevel}
shows results
on NEWS20 with $\epsilon \! = \! 1.0$.
As can be seen, C-0
in Figure~\ref{fig:hlevel}(b) has the best performance among all single low-level models,
as it contains the most important features.
Besides,
stacking (i.e., C-hl), is the best way to combine predictions from C-\{0-4\},
which also offers better performance than any single low-level models.

\begin{figure}[ht]
	\centering
	\subfigure[PST-F(U).]
	{\includegraphics[width=\figwidth\columnwidth]{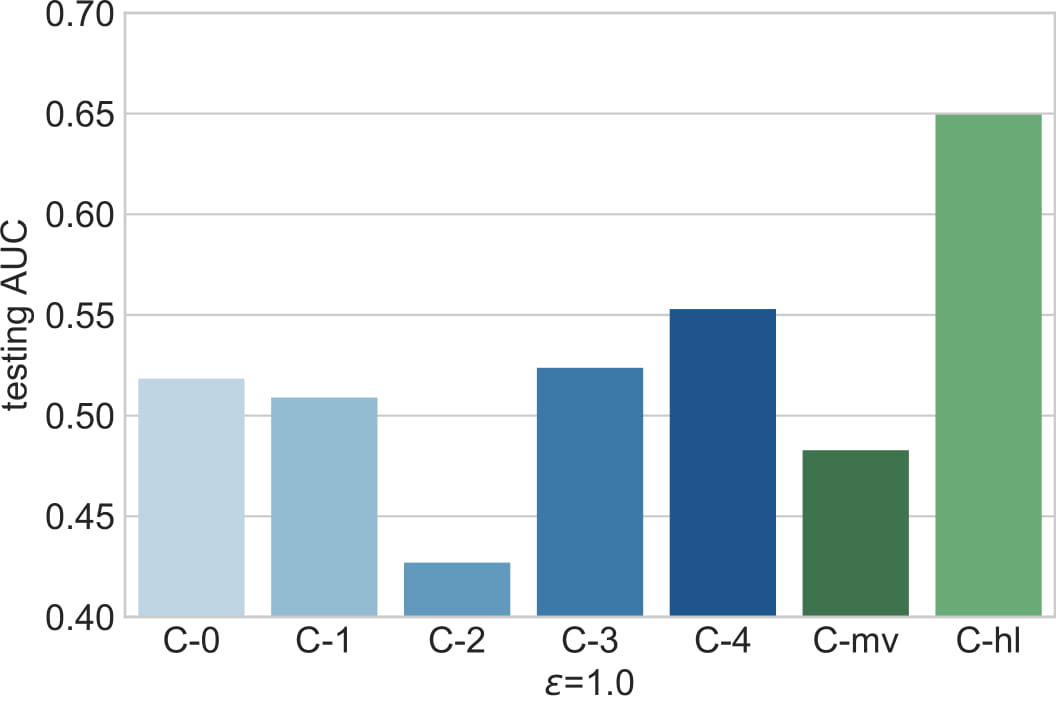}}
	\subfigure[PST-F(W).]
	{\includegraphics[width=\figwidth\columnwidth]{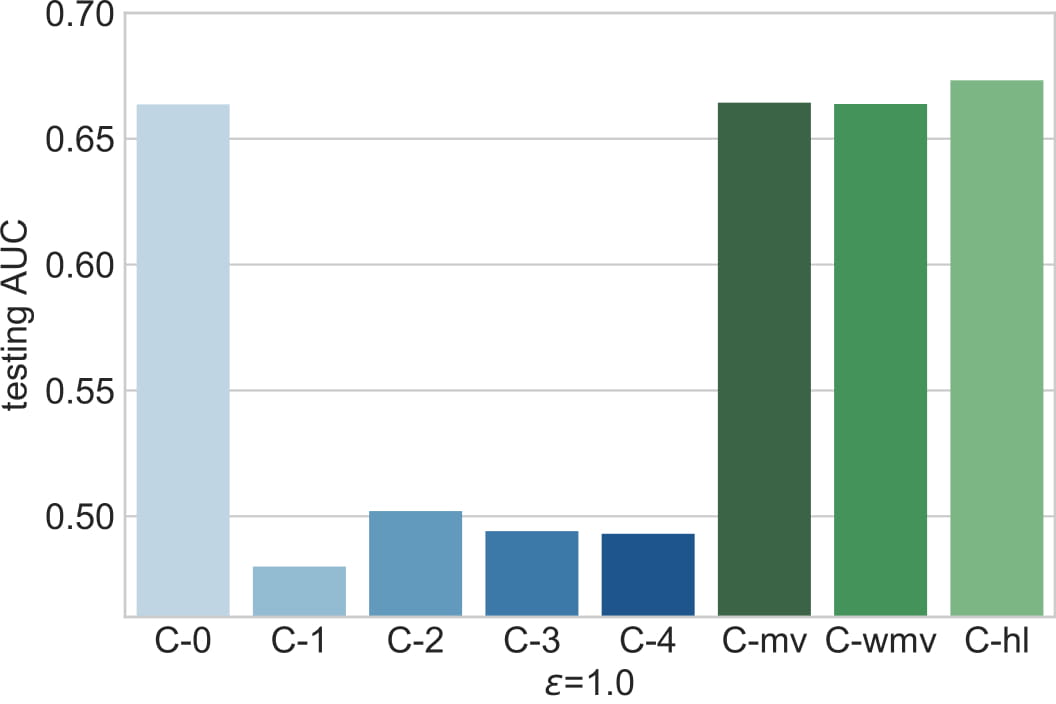}}
	\vspace{-10px}
	\caption{Testing AUC of low-levels models
			and different combining methods on
			NEWS20 ($\epsilon = 1.0$),
			where C-0 to C-4 are performance
			of low-level models.}
	\label{fig:hlevel}
	\vspace{-5px}
\end{figure}

\vspace{-5px}
\subsection{Diabetes Prediction}
\label{ssec:diab}

\noindent
\textbf{1). Background.}
Diabetes is a group of metabolic disorders with high blood sugar levels over a prolonged period.
From 2012 to 2015, approximately 1.5 to 5 million deaths each year are resulted from
diabetes.
Thus,
prevention and diagnosis of diabetes are of great importance.
The RUIJIN
diabetes data set
is collected
by the Shanghai Ruijin Hospital
during two investigations
(in 2010 and 2013), conducted
by the main hospital in Shanghai and 16 branches across China.
The first investigation consists of questionnaires and laboratory tests
collecting demographics,
life-styles,
disease information,
and physical examination results.
The second investigation includes diabetes diagnosis.
Some collected features are shown in Table~\ref{tab:ruijin-feature}.
Table~\ref{tab:ruijin} shows a total of 105,763 participants who appear in both two investigations.
The smaller branches may not have sufficient labeled medical records for good prediction.
Hence, it will be useful to borrow knowledge learned by the
main hospital.
However,
users' privacy is a major concern, and
patients' personal medical records
in the main hospital should not be leaked  to the branches.

\begin{table}[ht]
\vspace{-5px}
\centering
\scalebox{0.8}
{
	\begin{tabular}{l c l}
	\hline
	name   & importance & explaination                              \\ \hline
	mchild & 0.010                          & number of children                    \\
	weight & 0.012                          & birth weight                              \\
	bone   & 0.013                          & bone mass measurement                     \\
	eggw   & 0.005                          & frequency of having eggs                  \\ \hline
	Glu120 & 0.055                          & glucose level 2 hours after meals     \\
	Glu0   & 0.060                          & glucose level immediately after meals \\
	age    & 0.018                          & age                                       \\
	bmi    & 0.043                          & body mass index                           \\
	HDL    & 0.045                          & high-density lipoprotein                  \\ \hline
\end{tabular}
}
\vspace{-5px}
\caption{Some features in the RUIJIN data set,
	and importance is suggested by doctors.
	Top (resp. bottom) part: Features collected from the first (resp. second) investigation.}
\vspace{-5px}
\label{tab:ruijin-feature}
\end{table}

\begin{table}[ht]
\centering
\scalebox{0.8}
{\begin{tabular}{ccccccccc}
	\hline
	     main      &      \#1      &      \#2      &      \#3      &      \#4      &      \#5      &      \#6      &      \#7      &      \#8      \\
	$\!$12,702$\!$ & $\!$4,334$\!$ & $\!$4,739$\!$ & $\!$6,121$\!$ & $\!$2,327$\!$ & $\!$5,619$\!$ & $\!$6,360$\!$ & $\!$4,966$\!$ & $\!$5,793$\!$ \\ \hline
	               &      \#9      &     \#10      &     \#11      &     \#12      &     \#13      &     \#14      &     \#15      &     \#16      \\
	               & $\!$6,215$\!$ & $\!$3,659$\!$ & $\!$5,579$\!$ & $\!$2,316$\!$ & $\!$4,285$\!$ & $\!$6,017$\!$ & $\!$6,482$\!$ & $\!$4,493$\!$ \\ \hline
\end{tabular}}
\vspace{-5px}
\caption{Number of samples collected from the main hospital and 16 branches in the RUIJIN data set.}
\vspace{-5px}
\label{tab:ruijin}
\end{table}


\noindent
\textbf{Setup.}
In this section,
we apply the method in Section~\ref{sec:xfer}
for diabetes prediction.
Specifically,
based on the patient data collected during the first investigation in 2010,
we predict whether he/she will have diabetes diagnosed in 2013.
The main hospital serves as the source domain, and the branches are the target domains.
We set $\epsilon_{\text{src}} = \epsilon_{\text{tgt}} = 1.0$.
The following methods are also compared:
(i) PLR(target), which directly uses PLR on the target data;
(ii) PPHTL \cite{wangdifferentially}:
a recently proposed privacy-preserving HTL method based on PLR;
(iii) PST-F(U): There are 50 features, and they are randomly split into five groups, i.e., $K = 5$,
and each group have equal weights;
(iv) PST-F(W): Features are first sorted by importance,
and then grouped as follows:
The top 10 features are placed in the first group, the
next 10 features go to the second group, and so on.
$q^k$ is set based on \eqref{eq:weights}, with $v_i$ being the  importance
values provided by the doctors.
The other settings are the same as in Section~\ref{sec:bm-data set}.

%

\vspace{3px}
\noindent
\textbf{2). Results.}
Results are shown in Table~\ref{tab:RUIJIN_performance}.
PPHTL may not have better performance than PLR(target),
which is perhaps due to noise introduced in features.
However,
PST-F(U) improves over PPHTL by feature splitting,
and consistently outperforms PLR(target).
PST-F(W),
which considers features importance,
is the best.

\vspace{-5px}
\section{Conclusion}
\label{sec:concl}

In this paper,
we propose a new privacy-preserving machine learning method,
which improves privacy-preserving logistic regression by stacking.
This can be done
by either sample-based or feature-based partitioning of the data set.
We provide theoretical  justifications that the feature-based approach is better and
requires
a smaller sample complexity.
Besides,
when
the importance
of features
is available,
this can further boost the feature-based approach both in theory and practice.
Effectiveness
of the proposed method is verified on both standard
benchmark data sets
and a real-world
cross-organizational diabetes prediction
application.
As a future work,
we will extend the proposed algorithm to other classifiers, such as decision tree and deep networks.


\section*{Acknowledgment}
We acknowledge the support of Hong Kong CERG-16209715.
The first author also thanks Bo Han from Riken for helpful suggestions.

{
\bibliographystyle{named}
\bibliography{lib}
}

\appendix
\section{Proof}

\noindent
\textbf{Notation.}
Given two datasets $\mD$ and $\mD'$, $|\mD - \mD'| = 1$ denotes that
 $\mD$ and $\mD'$  differ only $1$ sample.
 Let
 $J(\vw\rightarrow \vb|\mD)$ denote the Jacobian matrix of the
mapping from $\vw$ to $\vb$, when the dataset is $\mD$.
Let  $\bar F(\vw;\mD, \vb, \Delta)$ denote $\sum_{\{\vx_i, y_i\} \in \mD} \ell(\vw^\top\vx_i, y_i)+ \frac{1}{n}\vb^\top\vw+
  \frac{1}{2}\Delta\|\vw\|^2 $, where $n$ is the number of samples in $\mD$.
\subsection{Proposition~\ref{pr:simple}}

\begin{proof}
Note that we  apply PLR algorithm with privacy budget $\epsilon$ on
$\mathcal{S}_k$, so
$\vw_k^l$ is $\epsilon$-differentially private for $\mathcal{S}_k$.
We apply PLR algorithm  on meta-data $\mathcal{M}^s$. So we have 
\begin{equation*}
\frac{\pr(\vw^h|\mD^h)}{\pr(\vw^h|\mD'^h)} =
\frac{\pr(\vw^h|\mathcal{M}^s)}{\pr(\vw^h|\mathcal{M}_s)} = \epsilon.
\end{equation*}
Since
$\{\mathcal{S}_1,\mathcal{S}_2,\ldots, \mathcal{S}_K, \mathcal D^h\}$ are disjoint subsets,
according to Theorem~4 in \cite{McSherry2009},
$ \{\{\vw^l_k\}, \vw^h  \}$
is $\epsilon$-differentially private.
\end{proof}

\subsection{Theorem~\ref{thm:prsplit_plr}}

To prove Theorem~\ref{thm:prsplit_plr}, we first prove the
following Lemma~\ref{lem:split_plr1} and \ref{lem:split_privacy}.
Without of generality,
we assume $g$ is of $1$-strongly convex in the sequel.

\begin{lemma}\label{lem:split_plr1}
    For a dataset $\mD$  and  a vector $\vb$,
    assume that $\ell$ is differentiable and continuous with $|\ell'(z)| \le 1$,
    and $|\ell''(z)| \le c$ for all $z$, and $g$ is $1$-strongly convex,
    $\|\vx_i\| \le q$ for all $\vx_i \in \mD$.
    Let $\mD$ and $\mD'$ be the two datasets which differ in the value of
    the $n$-th item such that
    \begin{align}
    \mD &= \{(\vx_1, y_1), \dots, (\vx_{n-1}, y_{n-1}), (\vx_n, y_n)\}, \\
    \mD' &= \{(\vx_1, y_1), \dots, (\vx_{n-1}, y_{n-1}), (\vx'_n, y'_n)\},
    \end{align}
    Moreover, let $\vb$ and $\vb'$ be two vectors such that
\begin{equation*}
\begin{split}
    \bar\vw &= \arg\min_{\vw} \bar F(\vw;\mD, \vb, \Delta) + \lambda g(\vw)  \\
&=
\arg\min_{\vw} \bar F(\vw;\mD', \vb', \Delta) + \lambda g(\vw)
\end{split}
\end{equation*}
For any $\Delta \ge 0$,
we have
\begin{equation}
    \frac{|\dett(J(\bar{\vw} \rightarrow \vb'|\mD'))|}
    {|\dett(J(\bar{\vw} \rightarrow \vb|\mD))|}
    \le \left(1 + \frac{q^2c}{n(\lambda + \Delta)}\right)^2,
    \label{eq:app1}
\end{equation}
and
\begin{equation}
    \|\vb\| - \|\vb'\| \le 2q.
    \label{eq:app2}
\end{equation}
\end{lemma}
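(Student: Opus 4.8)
Both \eqref{eq:app1} and \eqref{eq:app2} will be read off from the first-order optimality condition that characterizes $\bar{\vw}$. Since $g$ is $1$-strongly convex and $\Delta\ge 0$, the regularized objective is strongly convex, so its minimizer is unique, and for a fixed dataset and $\Delta$ the stationarity equation defines a bijection $\vw\leftrightarrow\vb$; in particular the Jacobians in \eqref{eq:app1} are well defined and invertible. Writing $\ell'$ and $\ell''$ for the first two derivatives of $z\mapsto\ell(z,y)$, stationarity at $\bar{\vw}$ for the pair $(\mD,\vb)$ rearranges to
\begin{equation*}
\vb = -\sum_{i=1}^{n}\ell'(\bar{\vw}^\top\vx_i,y_i)\,\vx_i - n\Delta\bar{\vw} - n\lambda\nabla g(\bar{\vw}),
\end{equation*}
and the same identity holds for $(\mD',\vb')$ with the $n$-th summand using $(\vx'_n,y'_n)$; each expresses $\vb$ (resp.\ $\vb'$) explicitly as a function of $\bar{\vw}$.

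For \eqref{eq:app2}: subtract the two identities. The terms $n\Delta\bar{\vw}+n\lambda\nabla g(\bar{\vw})$ and the contributions of the shared samples ($i<n$) cancel, leaving $\vb-\vb'=\ell'(\bar{\vw}^\top\vx'_n,y'_n)\,\vx'_n-\ell'(\bar{\vw}^\top\vx_n,y_n)\,\vx_n$. Since $|\ell'|\le 1$ and $\|\vx\|\le q$, each summand has norm at most $q$, so $\|\vb-\vb'\|\le 2q$, and the reverse triangle inequality gives $\|\vb\|-\|\vb'\|\le\|\vb-\vb'\|\le 2q$.

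For \eqref{eq:app1}: differentiating the stationarity identity in $\bar{\vw}$ gives $J(\bar{\vw}\rightarrow\vb|\mD)=-B$ with $B:=E+A$, where $E:=\sum_{i=1}^{n}\ell''(\bar{\vw}^\top\vx_i,y_i)\,\vx_i\vx_i^\top$ and $A:=n\Delta I+n\lambda\nabla^2 g(\bar{\vw})$, and likewise $J(\bar{\vw}\rightarrow\vb'|\mD')=-B'$ with $B':=E'+A$. Because $g$ is $1$-strongly convex, $A\succeq n(\lambda+\Delta)I$, and because the logistic loss satisfies $\ell''\ge 0$ we have $E,E'\succeq 0$; hence $B,B'\succ 0$, and the ratio in \eqref{eq:app1} equals $\dett(B'B^{-1})=\dett(I+B^{-1}(B'-B))>0$. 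The perturbation $B'-B=a'\,\vx'_n(\vx'_n)^\top-a\,\vx_n\vx_n^\top$, where $a:=\ell''(\bar{\vw}^\top\vx_n,y_n)\in[0,c]$ and $a':=\ell''(\bar{\vw}^\top\vx'_n,y'_n)\in[0,c]$, has rank at most $2$; so, using $\dett(I+MN)=\dett(I+NM)$, the determinant collapses to the $2\times2$ determinant $\dett(I_2+P\,\mathrm{diag}(a',-a))$ with $P:=[\vx'_n,\vx_n]^\top B^{-1}[\vx'_n,\vx_n]\succeq 0$. From $\|B^{-1}\|\le\frac{1}{n(\lambda+\Delta)}$ and $\|\vx_n\|,\|\vx'_n\|\le q$, the diagonal entries of $P$ are at most $\frac{q^2}{n(\lambda+\Delta)}$; expanding the $2\times2$ determinant and using $a,a'\in[0,c]$ together with $P\succeq 0$ bounds it by $1+\frac{q^2c}{n(\lambda+\Delta)}\le\big(1+\frac{q^2c}{n(\lambda+\Delta)}\big)^2$, which is \eqref{eq:app1}.

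The stationarity bookkeeping and the norm estimate for \eqref{eq:app2} are routine. The main obstacle is \eqref{eq:app1}: exploiting the rank-$\le 2$ structure of $B'-B$ to reduce the $d$-dimensional determinant to a $2\times2$ one and controlling the signs in its expansion, and establishing $B\succ 0$ with the stated bound on $\|B^{-1}\|$ — precisely where the $1$-strong convexity of $g$, the convexity ($\ell''\ge 0$) of the logistic loss, and the factor $n$ coming from the scaling of the perturbation term all enter.
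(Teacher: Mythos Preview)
Your proof is correct and follows essentially the same route as the paper: both read off $\vb$ from the first-order condition, bound $\|\vb-\vb'\|$ directly for \eqref{eq:app2}, and for \eqref{eq:app1} exploit that the Jacobian perturbation has rank at most $2$ together with the Hessian lower bound $n(\lambda+\Delta)$ coming from strong convexity of $g$ (and, implicitly in both arguments, $\ell''\ge 0$). The only cosmetic difference is that the paper expresses the rank-$2$ determinant through the two nonzero eigenvalues of $A^{-1}E$ (following \cite{Chaudhuri2011}), whereas you collapse it to a $2\times 2$ determinant via $\det(I+MN)=\det(I+NM)$ and expand explicitly; your intermediate bound $1+\tfrac{q^2c}{n(\lambda+\Delta)}$ is in fact slightly sharper before you relax it to the squared form.
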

\begin{proof}
      We take the gradient of $F$ to $0$ at $\bar{\vw}$,
    and can obtain
    \begin{equation*}
        \vb = -n\lambda\Delta g(\bar{\vw})
        - \sum_{i}^ny_i\ell'(y_i\vx_i^{\top}\bar{\vw})\vx -n\Delta\bar{\vw}
    \end{equation*}
    Moreover, define matrices $A$ and $E$ as follows:
    \begin{align}
        A &= n\lambda\nabla^2g(\bar{\vw}) +
        \sum_{i=1}^n y_i^2\ell''(y_i\vx^{\top}_i\bar{\vw})\vx_i\vx_i^{\top} + n \Delta I_d, \\
        E &=
        - y_i^2\ell''(y_i\vx^{\top}_i\bar{\vw})\vx_i\vx_i^{\top} +(y'_n)^2\ell''(y'_n\vx'^{\top}_i\bar{\vw})\vx_n'\vx_n'^{\top}
    \end{align}
    From the proof of Theorem~9 in \cite{Chaudhuri2011} we know
    \begin{equation*}
        \begin{split}
    &\frac{|\dett(J(\bar{\vw} \rightarrow \vb'|\mD'))|}
    {|\dett(J(\bar{\vw} \rightarrow \vb|\mD))|} \\
    &=
    |1 + \tau_1(A^{-1}E) + \tau_2(A^{-1}E) + \tau_1(A^{-1}E)\tau_2(A^{-1}E)|,
    \end{split}
    \end{equation*}
    where $\tau_1(), \tau_2()$ denote the largest and second largest
    eigenvalues
    of a matrix. Applying the triangle inequality of trace norm,
    \begin{equation*}
        \begin{split}
        |\tau_1(E)| + |\tau_2(E)| \le& |y_n^2\ell''(y_n\vx_n^{\top}\bar{\vw})|\|\vx_n\|^2 \\
        &+
         |-(y'_n)^2\ell''(y_n\vx_n'^{\top}\bar{\vw})|\|\vx_n'\|^2
        \end{split}
    \end{equation*}
    Then upper bounds on $|y_i|$, $\|\vx_i\|$, and $|\ell''(z)|$ yield
    \begin{equation*}
        |\tau_1(E)| + |\tau_2(E)| \le 2cq^2.
    \end{equation*}
    Therefore $|\tau_1(E)|\cdot |\tau_2(E)| \le c^2q^4$, and
    \begin{equation*}
        \begin{split}
    &\frac{|\dett(J(\bar{\vw} \rightarrow \vb'|\mD'))|}
    {|\dett(J(\bar{\vw} \rightarrow \vb|\mD))|} \\
    &\le 1 + \frac{2cq^2}{n(\lambda+\Delta)} + \frac{c^2q^4}{n^2(\lambda+\Delta)^2}
     = \left(1 + \frac{cq^2}{n(\lambda+\Delta)}\right)^2
        \end{split}
    \end{equation*}
    and we obtain \eqref{eq:app1}.
    For $\vb$ and $\vb'$, we have
    \begin{equation*}
        \vb' - \vb = y_n\ell'(y_n\vx^{\top}_n\bar{\vw})\vx_n
        -y'_n\ell'(y_n\vx_n'^{\top}\bar{\vw})\vx_n'
    \end{equation*}
    Due to that of $|\ell'(\cdot)|\le 1$,$|y_i|\le 1$, $\|\vx_i\|\le q$,
    we
    have
    \begin{equation*}
        \|\vb\| - \|\vb\|' \le \|\vb - \vb'\| \le 2q,
    \end{equation*}
 	and we obtain \eqref{eq:app2}.
\end{proof}

\begin{lemma}\label{lem:split_privacy}
    $\{\vw^{l}_k\}$ in Algorithm~\ref{alg:pstfp} is $\epsilon$-differentially
    private with dataset $\mathcal D^l$.
\end{lemma}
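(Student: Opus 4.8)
The plan is to bound, for a single‑record change in $\mathcal{D}^l$, the joint density of the $K$ low‑level parameters $\vw_1^l,\dots,\vw_K^l$ and show the resulting likelihood ratio never exceeds $e^{\epsilon}$. The point where this differs from Proposition~\ref{pr:simple} is that feature partitioning is not sample partitioning: changing one record of $\mathcal{D}^l$ perturbs \emph{every} feature block $\mathcal{F}_k$ at once (only the coordinates in that block change), so parallel composition does not apply. Instead the $K$ per‑block ratios must be multiplied, and the budget is apportioned across the blocks through the weights $q_k$; the role of step~3 is precisely to make this apportionment tight.

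First I would establish the factorization. Since each $\vw_k^l$ is the minimizer in step~12 driven by its own independent noise $\vb_k$, and depends on $\mathcal{D}^l$ only through $\mathcal{F}_k$, the outputs are conditionally independent given the data, so for $\mathcal{D}^l,\mathcal{D}'^l$ with $|\mathcal{D}^l-\mathcal{D}'^l|=1$ we get $\pr(\{\vw_k^l\}\mid\mathcal{D}^l)/\pr(\{\vw_k^l\}\mid\mathcal{D}'^l)=\prod_{k=1}^K \pr(\vw_k^l\mid\mathcal{F}_k)/\pr(\vw_k^l\mid\mathcal{F}_k')$, where $\mathcal{F}_k'$ is the restriction of $\mathcal{D}'^l$ to block $k$ and differs from $\mathcal{F}_k$ in exactly one sample. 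For a fixed $k$, because $g_k$ is $1$‑strongly convex and $\Delta_k\ge 0$ the stationarity condition of the step‑12 objective makes $\vw\mapsto\vb_k$ a bijection, so the change‑of‑variables identity used in the proof of Theorem~9 of \cite{Chaudhuri2011} gives $\pr(\vw_k^l\mid\mathcal{F}_k)/\pr(\vw_k^l\mid\mathcal{F}_k') = \big(|\dett(J(\vw_k^l\!\to\!\vb_k'|\mathcal{F}_k'))|/|\dett(J(\vw_k^l\!\to\!\vb_k|\mathcal{F}_k))|\big)\cdot\big(h(\vb_k)/h(\vb_k')\big)$. I then apply Lemma~\ref{lem:split_plr1} with $q=q_k$ (legitimate because step~5 rescales every $\vx\in\mathcal{F}_k$ to $\|\vx\|\le q_k$), $c=\tfrac14$ for the logistic loss, and $\Delta=\Delta_k$: the Jacobian ratio is at most $\big(1+\tfrac{q_k^2/4}{n(\lambda_k+\Delta_k)}\big)^2$, while \eqref{eq:app2} gives $\|\vb_k'\|-\|\vb_k\|\le 2q_k$, hence $h(\vb_k)/h(\vb_k')\le \exp(\epsilon_k q_k)$.

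The remaining work is the case split exactly as in the algorithm. If $\epsilon'>0$, then $\Delta_k=0$ and $\epsilon_k=\epsilon'$, so the $k$‑th factor is at most $\big(1+\tfrac{q_k^2}{4n\lambda_k}\big)^2 e^{\epsilon' q_k}$; taking logarithms and summing over $k$, and using $\log\!\big(1+\tfrac{q_k^2}{2n\lambda_k}+\tfrac{q_k^4}{16n^2\lambda_k^2}\big)=2\log\!\big(1+\tfrac{q_k^2}{4n\lambda_k}\big)$ together with the definition of $\epsilon'$ in step~3 and $\sum_k q_k=1$, the total exponent is exactly $\epsilon$. If $\epsilon'\le 0$, then $\lambda_k+\Delta_k=\tfrac{q_k^2}{4n(\exp(\epsilon q_k/4)-1)}$ and $\epsilon_k=\epsilon/2$, so the Jacobian ratio becomes $\big(1+(\exp(\epsilon q_k/4)-1)\big)^2=\exp(\epsilon q_k/2)$, the noise ratio is at most $\exp(\epsilon q_k/2)$, the $k$‑th factor is $\exp(\epsilon q_k)$, and the product over $k$ is $\exp(\epsilon\sum_k q_k)=\exp(\epsilon)$. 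Swapping $\mathcal{D}^l$ and $\mathcal{D}'^l$ gives the reciprocal bound, so $\{\vw_k^l\}$ is $\epsilon$‑differentially private with respect to $\mathcal{D}^l$.

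The main obstacle is conceptual rather than computational: recognizing that feature partitioning forces sequential composition over the $K$ blocks, and then checking that the budget allocation hard‑wired into step~3 (the $q_k$‑weighted correction in $\epsilon'$ and the rescaling in step~5) makes the product of the $K$ per‑block bounds collapse to $e^{\epsilon}$. Within that, the delicate point is the $\epsilon'\le 0$ branch, where one must additionally confirm $\Delta_k\ge 0$ so that Lemma~\ref{lem:split_plr1} is applicable and verify the identity $\big(1+(\exp(\epsilon q_k/4)-1)\big)^2=\exp(\epsilon q_k/2)$ that drives the collapse. Everything else is a block‑local reuse of the Chaudhuri–Monteleoni–Sarwate change‑of‑variables argument.
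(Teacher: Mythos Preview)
Your proposal is correct and follows essentially the same route as the paper's proof: factorize the joint density via conditional independence of the $\vw_k^l$'s, rewrite each factor by the change-of-variables/Jacobian identity from \cite{Chaudhuri2011}, apply Lemma~\ref{lem:split_plr1} with $q=q_k$ and $c=\tfrac14$, and then do the case split $\epsilon'>0$ versus $\epsilon'\le0$ so that the product of the $K$ per-block bounds collapses to $e^{\epsilon}$ using $\sum_k q_k=1$ and the definition of $\epsilon'$ in step~3. Your write-up is in fact slightly more explicit than the paper's (you spell out the identity $2\log(1+\tfrac{q_k^2}{4n\lambda_k})=\log(1+\tfrac{q_k^2}{2n\lambda_k}+\tfrac{q_k^4}{16n^2\lambda_k^2})$ and the $\epsilon'\le0$ arithmetic); the only caveat is that what you actually need in the second branch is $\lambda_k+\Delta_k>0$ rather than $\Delta_k\ge0$, and that holds by construction since $\lambda_k+\Delta_k=\tfrac{q_k^2}{4n(\exp(\epsilon q_k/4)-1)}$.
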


\begin{proof}
    For simplicity, in this proof we ignore the superscript $\cdot^l$.
    The proof follows the proof of Theorem~$9$  in \cite{Chaudhuri2011}.
    Let $\mD$ and $\mD'$ be two datasets of size $n$ and  $|\mD - \mD'|=1$.
    So $|\mF_k - \mF'_k| = 1$ for all $k$.
    We have a set of optimization problems
    \begin{equation*}
        \vw_k = \arg\min_{\vw} \bar F(\vw; \mF_k,\vb_k, \Delta_{k})
        + \lambda_kg_k(\vw), \forall k.
    \end{equation*}
    Since ${\vw_{1}, \vw_{2}, \dots, {\vw}_{K}}$ are independent given the
    dataset, we have
    \begin{equation*}
        \begin{split}
        &\frac{\pr(\{\vw_{k}\}^K_{k=1}|\mD)} {\pr(\{\vw_{(k)}\}^K_{k=1}|\mD')}
        = \prod_{k=1}^K \frac{\pr(\vw_{k}|\mF_k)}{\pr(\vw_{k}|\mF_k')} \\
        &= \prod_{k=1}^K \frac{\pr(\vb_{k}|\mF)} {\pr(\vb'_{k}|\mF_k')}
        \frac{\left|\dett(J_k({\vw}^{(k)}\rightarrow \vb_{k}|\mF_k))\right|^{-1}}
        {\left|\dett(J_k({\vw}_{k}\rightarrow \vb'_{k}|\mF_k'))\right|^{-1}}.
    \end{split}
    \end{equation*}
    By \eqref{eq:app1} in Lemma~\ref{lem:split_plr1}  with upper bound of sample norm $q^{(k)}$,
    \begin{equation*}
        \frac{\left|\dett(J_k({\vw}_{k}\rightarrow \vb_{k}|\mF_k))\right|^{-1}}
        {\left|\dett(J_k({\vw}_{(k)}\rightarrow \vb'_{k}|\mF_k'))\right|^{-1}}
        \le \left(1 + \frac{c (q_{k})^2}{n(\lambda_{k} + \Delta_{k})}\right)^2,
    \end{equation*}
    and
    \begin{equation*}
        \frac{\pr(\vb_{k}|\mD)}{\pr(\vb'_{k}|\mD')}  \le
        e^{\epsilon'(\|\vb_{k}\| - \|\vb'_{k}\|)/2} \le e^{q_{k}\epsilon'}.
    \end{equation*}
    Thus
 \begin{equation*}
        \frac{\pr(\{{\vw}_{k}\}^K_{k=1}|\mD)} {\pr(\{{\vw}_{k}\}^K_{k=1}|\mD')}
        = e^{\epsilon'} \prod_{k=1}^K\left(1+\frac{c (q_{k})^2}{n(\lambda_{k} + \Delta_{k})}\right)^{2}.
 \end{equation*}
 We know that for $\ell(\cdot)$, $c = \frac{1}{4}$.
 When $\Delta=0$,
 $\epsilon = \epsilon' + \sum_{k=1}\log(1 + \frac{(q_{k})^2}{2n\lambda_{k}}
 + \frac{(q_{k})^4}{16n^2(\lambda_{k})^2})$,
 we have
\begin{equation*}
        \frac{\pr(\{\vw_{k}\}^K_{k=1}|\mD)} {\pr(\{{\vw}_{k}\}^K_{k=1}|\mD')}
        = e^{\epsilon}.
 \end{equation*}
 When $\Delta>0$, by definition,
 \begin{equation*}
     \prod_{k=1}^K\left(1+\frac{c(q_{k})^2}{n(\lambda_{k} + \Delta_{k})}\right)^{2}
        = \prod_{k=1}^K e^{q_{k}\epsilon/2}
        =  e^{\epsilon/2},
 \end{equation*}
 \begin{equation*}
     \epsilon = \epsilon' + \epsilon/2.
 \end{equation*}
 As a result,
\begin{equation*}
        \frac{\pr(\{{\vw}_{k}\}^K_{k=1}|\mD)} {\pr(\{{\vw}_{k}\}^K_{k=1}|\mD')}
        = e^{\epsilon}.
 \end{equation*}
\end{proof}
Now we are ready to prove Theorem~\ref{thm:prsplit_plr}.
\begin{proof}
From Lemma~\ref{lem:split_privacy}, we know that $\{\vw^k_l\}$ is
$\epsilon$-differentially private for $\mD_l$. Since we apply PLR on $\mM^f$,
we have
\begin{equation*}
    \frac{\pr(\vw^h|\mD^h)}{\pr(\vw^h|\mD'^h)} =
    \frac{\pr(\vw^h|\mM^f)}{\pr(\vw^h|\mM'^f)} = \epsilon.
\end{equation*}
So $\vw^h$ is $\epsilon$-differentially private for $\mD^h$.
Since $\mD^l$ and $\mD^h$ are disjoint subsets,
according to Theorem~4 in \cite{McSherry2009}, 
$\{\{\vw^l_k\}, \vw^h\}$ is $\epsilon$-differentially private for $\mD$.
\end{proof}

\subsection{Theorem~\ref{thm:plr_fs_learn}}
\begin{proof}
For simplicity, we omit the superscript $\cdot^l$.
Suppose the samples in $\mF$ are i.i.d. drawn according to $P_k$.
We define
\begin{align}
    F_k(\vw, \mD) &= \frac{1}{n}\sum_{(\vx_i, y_i)\in
        \mF_k}\ell(\vw^\top\vx_i, y_i) + \lambda_k g_k(\vw),\\
    \tilde F_k(\vw) &= L(\vw, P_k) + \frac{\lambda_k}{2}\|\vw\|^2.
\end{align}
and let
\begin{align}
    \tilde {\vw}_k   &= \arg\min_{\vw} \tilde F_k(\vw)\\
    (\vw_k)^* &= \arg\min_{\vw} F_k(\vw, \mD).
\end{align}
The results in proof of Theorem~18 in \cite{Chaudhuri2011} shows
\begin{align}
\label{eq:split_L}
        L({\vw}_k) = &L({\vv}_k) + (\tilde F_k({\vw}_k) -
\tilde F_k(\tilde{\vw}_k))\\
&+
(\tilde F_k(\tilde{\vw}_k) - \tilde F_k(\vv_k)) 
+
\frac{\lambda_k}{2}\|\vv_k\|^2
- \frac{\lambda_k}{2}\|{\vw}_k\|^2.
\notag
\end{align}
Let $s_k = \|\vv_k\|$
If $n > \frac{q_k(s_k)^2}{\epsilon_g\epsilon}$ and
$\lambda_k > \frac{\epsilon_g}{(s_k)^2}$, then $n\lambda_k > \frac{q_k}{\epsilon}$,
from the definition of $\epsilon'$ in Algorithm~\ref{alg:pstfp},
\begin{equation*}
    \begin{split}
    \epsilon' &= \epsilon - 2\prod_{k=1}^K\log(1 +
    \frac{(q_k)^2}{4n\lambda_k}) \\
    &= \epsilon - 2\prod_{k=1}^K\log(1 + \frac{q_k\epsilon}{4})
    \ge \epsilon - \frac{\epsilon}{2},
\end{split}
\end{equation*}
where the last step is from the inequality $\log(1+x) < x $ for $x \in [0, 1]$.

From the Lemma~$19$ in \cite{Chaudhuri2011}, we have that with probability at
least $1-\delta$,
\begin{equation*}
    F_k({\vw}_k, \mD)  - F_k(({\vw}_k)^*, \mD) \le \frac{4d^2\log^2(d/(K\delta))(q_k)^2}
    {\lambda_kn^2\epsilon^2}.
\end{equation*}
From \cite{Sridharan2009},
\begin{equation*}
    \begin{split}
        &\tilde F_k(\vw_k) - \tilde F_k(\tilde{\vw}_k) \\
        &\le
        2(F_k({\vw}_k, \mD) - F_k(({\vw}_k)^*, \mD)) +
    \mO(\frac{\log(1/\delta)}{K^2\lambda_kn})\\
    &\le \frac{8d^2\log^2(d/(K\delta))}{K^2\lambda_kn^2\epsilon^2}
    +
    \mO(\frac{(q_k)^2\log(1/\delta)}{\lambda_kn}).
\end{split}
\end{equation*}
By definition of $\tilde{\vw}_k$, we have
$\tilde F(\tilde\vw_k) - \tilde F(\vv_k) \le 0$.
If $\lambda_k = \frac{\epsilon_g}{(s_k)^2}$, then the $4$th term in
\eqref{eq:split_L} is at most $\frac{\epsilon_g}{2}$.  
Finally,
the Theorem
follows by solving for $n$  to make the total excess error at most $\epsilon_g$.
\end{proof}

\end{document}